\newtheorem{theorem}{Theorem}[section]
\newtheorem{lemma}{Lemma}[section]
\newcommand{\tO}{\tilde{O}}
\newcommand{\A}{\mathcal{A}}
\newcommand{\E}{\mathbb{E}}
\newcommand{\M}{\mathcal{M}}
\newcommand{\Y}{\mathcal{Y}}
\newcommand{\est}[1]{\widetilde{#1}} 
\newcommand{\estbf}[1]{\mathbf{\widetilde{#1}}} 
\newcommand{\CnP}{\textsf{C\&P}}
\newcommand{\eS}{\mathcal{S}}
\newcommand{\floor}[1]{\lfloor #1 \rfloor}
\newcommand{\ignore}[1]{}
\def\bold0{\mathbf{0}}
\title{Multi-Player Bandits: The Adversarial Case }%
\author{%
Pragnya Alatur \footnote{Department of Computer Science, ETH Zurich. \newline \qquad \qquad
Emails: \texttt{pragnya.alatur@gmail.com}, \texttt{~yehuda.levy@inf.ethz.ch},\texttt{~krausea@ethz.ch}.}
\and 
Kfir Y. Levy\footnotemark[1] \and
Andreas Krause\footnotemark[1] 
}
\date{}
\begin{document}
\maketitle

\begin{abstract}
We consider a setting where multiple players sequentially choose among a common set of actions (arms). 
Motivated by a cognitive radio networks application, we  
assume that players incur a loss upon colliding, and that communication between players is not possible.
Existing approaches assume that the system is stationary.  Yet this assumption is often violated in practice, e.g., due to signal strength fluctuations. 
In this work, we design the first  Multi-player Bandit algorithm that provably works in arbitrarily changing environments, where the losses of the arms may even be chosen by an adversary.
This resolves an open problem posed by \citet*{MusicalChairs}.
\end{abstract} 
\newpage
\section{Introduction}
The Multi Armed Bandit (MAB) problem is a fundamental setting for capturing and analyzing sequential decision making. 
Since the seminal work of \citet{robbins1952some} there has been a plethora of research on this topic \citep{cesa2006prediction,bubeck2012regret,lattimore2018bandit},   addressing both the stochastic and adversarial MAB settings. In the \emph{stochastic} setting it is assumed that the environment is stationary, namely that except for noisy fluctuations, the environment does not change over time.
The \emph{adversarial} setting is more general, and enables to capture dynamical (arbitrarily changing) environments.

Most existing work on MABs considers a single player who sequentially interacts with the environment. Nevertheless, in many real world scenarios, the learner also \emph{interacts with other players}, either collaboratively or competitively. 
One such intriguing Multi-player setting arises in cognitive radio networks, where multiple broadcasters (players) share a common set of transmission channels (arms).  In this setting, players incur an extra loss upon colliding (transmitting on the same channel), and communication between players is generally not possible.
This challenging setting has recently received considerable attention,
\cite{avner2014concurrent,MusicalChairs,bistritz2018distributed}.

Despite  impressive progress on Multi-player Bandit problems, existing works only address the stochastic setting where the environment is stationary.  Yet, this may not capture common phenomena in cognitive radio networks, such  as channel  breakdowns or signal strength fluctuations due to changing environmental conditions.

In this work we address the adversarial Multi-player MAB setting, and provide the \emph{first efficient algorithm with provable guarantees.} This resolves an open problem posed by \citet*{MusicalChairs}. 
Concretely, assuming that $K$ players choose among a set of $N$ arms, our method ensures a total regret of $\tO(K^{4/3}N^{2/3}T^{2/3})$\footnote{Using $\tO(\cdot)$ we ignore logarithmic factors in $T,N$.}.

Our key algorithmic technique is to  imitate the idealized case where there is  full communication between the players.  Then, to address the no-communication constraint, we enforce the players to keep the same decisions (arms)  within long periods of time (blocks). This gives them the chance to coordinate between themselves via a simple protocol that uses  collisions as  a primitive, yet effective manner of communication.

\paragraph{Related Work}
The stochastic Multi-player MAB problem was extensively investigated in the past years.
The majority of work on this topic  assumes that  players may communicate with each other 
\citep{lai2008medium,liu2010distributed,vakili2013deterministic,liu2013learning,avner2016multi,avner2018multi}. 
The more realistic ``no-communication" setting was discussed by
\citet{anandkumar2011distributed,avner2014concurrent,MusicalChairs}, and \citet{bistritz2018distributed}.

\citet{avner2014concurrent} were the first to provide regret guarantees for the  ``no-communication"  stochastic setting, establishing a bound of $O(T^{2/3}$). This was later improved by \citet{MusicalChairs}, who established a constant regret  (\emph{independent} of $T$) for the case where there exists a  fixed gap between mean losses. Recently, \citet{bistritz2018distributed} have explored a more challenging setting, where each player has a different loss vector for the arms. They have provided an algorithm that ensures $O(\log^2 T)$ regret for this setting.

The case where the number of players may change throughout the game was  addressed by  
\citet{MusicalChairs}, where a regret bound of $O(\sqrt{T})$ is established. 
\citet{avner2014concurrent} also discuss this case and provide an algorithm that in some scenarios ensures an $O(T^{2/3})$ regret.

Different Multi-player adversarial MAB settings were explored by \citet{awerbuch2008competitive}, and \citet{cesa2016delay}. Nevertheless, these works allow players to communicate, and do not assume a ``collision loss".

Thus, in contrast to our ``no communication" adversarial setting, existing work either addresses the stochastic setting or allows communication.


\section{ Background and Setting}

\label{sec:Setting}

\subsection{Background}\label{sec:background}

The $N$-armed bandit setting can be described as a repeated game over $T$ rounds between a \emph{single} player and an adversary. At each round  $t \in[T]$ (we denote $[N]: = \{1,\ldots, N\}$, for any  $N\in \mathbb{Z}^{+}$),

\begin{enumerate}
	\item  the player chooses an arm $I^t \in [N]$
	\item  the adversary independently chooses a loss for each arm  $l_i^t \in [0,1],\; \forall i\in [N]$
	\item  the player incurs the loss of the chosen arm  $l^t_{I^t}$, and gets to  view the  loss of this arm only (bandit feedback)
\end{enumerate}
The goal of the player is to minimize the \emph{regret}, defined as,
\begin{align*}
R_T &:=  \sum_{t=1}^T l^t_{I^t} - \min_{i\in [N]} \sum_{t=1}^T l^t_{i}
\end{align*}
We are interested in learning algorithms that ensure an expected regret which is sublinear in $T$, here expectation is with respect to possible randomization in the player's strategy as well as in the choices of the adversary.

The seminal work of \citet{Exp3} presents an algorithm that achieves an optimal regret bound of  $O(\sqrt{T N \log N})$ for this setting. Their algorithm,  called EXP3, devises an unbiased estimate of the loss vector in each round, $\left\{\est{l}^t_{i}\right\}_{i\in[N] }$. These are then used to pick an arm  in each round by sampling, $I^t \sim \exp( -\eta \sum_{\tau=1}^{t-1}\est{l}^\tau_{i})$.

\subsection{K-Player MAB Setting}
We consider a repeated game of $T$ rounds between $K$ players and an adversary in the $N$-armed bandit setting. For now assume that each player has a unique \emph{rank} in $[K]$, and that each player knowns her own rank (but does necessarily know the rank of other players)\footnote{As we show in 
	Section~\ref{sec:kplayer}, such ranking can be achieved by running a simple procedure at the beginning of the game  (see  Algorithm~\ref{alg:ranking}).}. We also refer to the player with rank k as ``player k". 
Now at each round  $t \in[T]$,
\begin{enumerate}
	\item  each player $k\in[K]$ chooses an arm $I_k^t \in [N]$
	\item  the adversary independently chooses a loss for each arm  $l_i^t \in [0,1],\; \forall i\in [N]$
	\item for each player $k\in[K]$ one of two cases applies, 
	
	\textbf{Collision:} if another player chose the same arm, i.e., $\exists m\neq k$ such $I_k^t =I_m^t$, then player $k$ gets to know that a  collision occured, and incurs a loss of $1$. \\
	\textbf{No Collision:} if there was no collision, player $k$ incurs the loss of the chosen arm  $l^t_{I_k^t}$, and gets to view the loss of this arm only (bandit feedback).
\end{enumerate}
We emphasize that at each round all players play \emph{simultaneously}. We further  assume that communication between players is not possible. Finally, note that the ability to distinguish between collision and non-collision is a reasonable assumption when modelling cognitive radio networks and was also used in previous work, e.g. \citet{MusicalChairs}.

Our assumption is that the players are \emph{cooperative} and thus, their goal is to obtain low regret together with respect to the $K$ \emph{distinct} best arms in hindsight. Let $C_k^t\in \lbrace 0,1 \rbrace$ be an indicator for whether player $k$ collided at time $t$ ($C_k^t=1$) or not ($C_k^t=0$). With this, we define the regret $R_T$, after $T$ rounds as follows:
\begin{align*}
R_T &:= \underbrace{\sum_{t=1}^T{\sum_{\substack{k=1,\\ C_k^t=0}}^K{l_{I_k^t}^t}}}_{\text{no collisions}} + \underbrace{\sum_{t=1}^T{\sum_{k=1}^K{C_k^t}}}_{\text{collisions}} - \min_{\substack{i_1,...,i_K \in [N] \\ i_m \neq i_n,\forall m\neq n}}\sum_{t=1}^T{\sum_{k=1}^K{l_{i_k}^t}}
\end{align*}
We are interested in learning algorithms that ensure an expected regret which is sublinear in $T$.

\paragraph{Staying quiet} For simplicity, we assume that a player may choose to stay \emph{quiet}, i.e., not choose an arm, in any given round. By staying quiet she does not cause any collisions, but she will still suffer a loss of 1 for that round. This is a reasonable assumption when thinking about communication networks, as a user may choose to not transmit anything.
\paragraph{Adversary} 
For simplicity we will focus our analysis  on \emph{oblivious} adversaries, meaning that the adversary may know the strategy of the players, yet he is limited to choosing the loss sequence before the game starts. As we comment later on, using standard techniques we may extend our algorithm and analysis to address {non-oblivious} adversaries.
\paragraph{Further assumptions} We assume that every player knows $T$, the number of arms $N$, the number of players $K$ and that $K<N$ and $N<T$. Furthermore, we assume that the set of players is fixed and no player enters or exits during the game. Using standard techniques we may extend our method for the case where $T$ is unknown.

\newpage

\section{Idealized, Communication-Enabled Setting}
\label{sec:metaplayer}
Here we first discuss the  idealized setting, in which players are  able to fully communicate. Thus, they can coordinate their choices to avoid collisions, resulting in a collision loss of  $0$, i.e., $\sum_{t=1}^T{\sum_{k=1}^K{C_k^t}}=0$. In this case, the $K$ players would behave as a single player who chooses $K$ distinct arms in each round and aims to obtain low regret with respect to the $K$ best arms in hindsight.

Let us refer to such a hypothetical player  as  a \emph{K-Metaplayer}.
Similarly to the standard bandit setting, in each step $t$ this Metaplayer chooses $K$ distinct arms
$I^t := \lbrace I_1^t,...,I_K^t \rbrace$, and then gets to view the losses of these arms as  bandit feedback\footnote{
	Actually, as we will soon see, we analyze a slightly different setting where the Metaplayer gets to view only a  single arm chosen uniformly at random from $I^t$.}. 
Her regret $R_T^{meta}$ after $T$ rounds is defined with respect to the  best \emph{distinct}  $K$arms in hindsight  as follows:
\begin{align*}
R_T^{meta} &:= \sum_{t=1}^T{\sum_{k=1}^K{l_{I_k^t}^t}} -\min_{\substack{i_1,...,i_K \in [N] \\ i_m \neq i_n,\forall m\neq n}}\sum_{t=1}^T{\sum_{k=1}^K{l_{i_k}^t}}
\end{align*}
where for the best in hindsight we assume $i_m\neq i_n$ for any $m\neq n$.
What should the K-Metaplayer do to achieve sublinear regret?

For $K=1$, i.e., the traditional single-player N-armed bandit setting, we know that the EXP3 algorithm by \citet{Exp3} achieves an expected regret of $O(\sqrt{T N \log N})$. As we will see soon, we can adapt    EXP3 for the K-Metaplayer case: The idea is to view each subset of $K$ distinct arms $\lbrace I_1,...,I_K\rbrace$ as a \emph{single meta-arm} $I$. We define the set of meta-arms $\M$ as follows:
\begin{align*}
\M &:= \Big\lbrace \lbrace i_1,...,i_K \rbrace \subseteq [N] \Big| i_m \neq i_n \text{ for any } m\neq n \Big\rbrace
\end{align*}

We  further define the loss $\mathbf{l_I^t}$ of a meta-arm $I$ at time $t$ as:
\begin{align}\label{eq:MetaLoss}
\mathbf{l_I^t} &:= \sum_{k\in I}{l_k^t}
\end{align}
From this, it is immediate that the best meta-arm in hindsight w.r.t. losses $(\mathbf{l_I^t})_{I\in \M,t\in [T] }$ consists of the $K$ best arms in hindsight w.r.t. $(l_i^t)_{i\in [N], t\in [T]}$.

With these definitions, the K-Metaplayer essentially chooses one meta-arm from $\M$ in each step, receives that meta-arm's loss and aims to obtain low regret with respect to the best meta-arm in hindsight. This is very similar to the  traditional single-player multi-armed bandit setting, for which we know that EXP3 achieves low regret. Unfortunately, for the K-Metaplayer case, the number of arms  is $|\M|=\binom{N}{K}$, which is \emph{exponential} in $K$. Thus, directly applying  EXP3  in this setting would yield regret guarantees that also scale exponentially with $K$.

Fortunately, as seen in Eq.~\eqref{eq:MetaLoss}, the losses of different meta-arms are \emph{dependent} in each other. This means that viewing the loss of a single arm can be used to receive feedback for many meta-arms (concretely for exactly $|\M|\cdot K/N$ meta-arms).
In Alg.~\ref{alg:metaplayer}, we show an adaptation of  EXP3 that makes use of this structure to substantially reduce the regret compared to a straightforward  application of EXP3. \\
\textbf{Feedback model:} In Alg.~\ref{alg:metaplayer}, we assume  more restrictive bandit feedback, where the player gets to view only a  \emph{single arm chosen uniformly at random} (u.a.r.) among $I^t := \lbrace I_1^t,...,I_K^t \rbrace$. As we show later, this serves as a building block for our algorithm  in the more realistic \emph{no-communication} setting.

Let us shortly describe Alg.~\ref{alg:metaplayer}. For each arm $i\in[N]$, we hold an unbiased estimate of its cumulative loss $\sum_{\tau=1}^{t-1}\est{l_i^{\tau}}$. This is then directly translated to cumulative loss estimates for each meta-arm  $\{ \estbf{L_I^t}\}_{I\in\M}$. Then, similarly to EXP3, we sample each meta-arm proportionally  to 
$I^t\propto \exp(-\eta\estbf{L_I^t})$, and as  bandit feedback we  view one of the arms in $I^t$ chosen uniformly from this set. This feedback is then used to devise an unbiased estimate for the loss of all arms, $\{ \est{l_i^t} \}_{i\in[N]}$.

The following Lemma states the guarantees of Alg.~\ref{alg:metaplayer},

\begin{algorithm}[tb]
	\caption{K-Metaplayer algorithm (Input: $\eta$)} \label{alg:metaplayer}
	\begin{algorithmic}[1]
		\State {\bfseries Input:} $\eta$
		\For{$t=1$ {\bfseries to} $T$}
		\State Set cumulative loss estimate $\estbf{L_I^t} = \sum_{\tau=1}^{t-1}{\sum_{i\in I}{\est{l_i^\tau}}}$, for all meta-arms $I\in \M$
		\State Set probability $p^t(I) = \frac{e^{-\eta \estbf{L_I^t}}}{\sum_{J\in \M}{e^{-\eta \estbf{L_J^t}}}}$, for all meta-arms $I\in \M$ \label{eq:exp3_prob}
		\State Sample meta-arm $I^t=\lbrace I_1^t,...,I_K^t\rbrace$ at random according to $P^t=(p^t(I))_{I\in \M}$
		\State Pick one of the $K$ arms $J^t \in_{u.a.r.}\lbrace I_1^t,...,I_K^t\rbrace$
		\State Choose arms $I_1^t,...,I_K^t$ in the game, suffer losses $l_{I_1^t}^t,...,l_{I_K^t}^t$ and observe $l_{J^t}^t$
		\State Set loss estimate $\est{l_i^t}=K \cdot \frac{l_i^t}{\sum_{i\in I \in \M}{p^t(I)}} \cdot \mathbb{I}_{\lbrace J^t=i\rbrace}$, for all arms $i\in [N]$
		\EndFor
	\end{algorithmic}
\end{algorithm}

\begin{lemma}
	\label{lem:metaplayer_regret}
	Employing the  K-Metaplayer algorithm (Alg.~\ref{alg:metaplayer}) with $\eta=\sqrt{\frac{\log N}{T N}}$ guarantees a regret bound of $2K\sqrt{T N \log N}$.
\end{lemma}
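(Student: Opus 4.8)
The plan is to reduce the analysis to the standard EXP3 regret bound applied to the meta-arm bandit instance, treating $\M$ as the action set of size $\binom{N}{K}$, and then to control the variance term carefully using the dependency structure among meta-arm losses. First I would verify that the loss estimates $\est{l_i^t}$ defined in Alg.~\ref{alg:metaplayer} are unbiased, i.e. $\E[\est{l_i^t} \mid \Fcal_{t-1}] = l_i^t$, where the conditioning is on the history. This follows because, conditioned on the sampled meta-arm $I^t$, the probability that the revealed arm $J^t$ equals $i$ is $\frac{1}{K}\mathbb{I}_{\{i \in I^t\}}$, and summing over meta-arms the marginal probability that $i$ is revealed is $\frac{1}{K}\sum_{i \in I \in \M} p^t(I)$; the normalization factor $K / \sum_{i \in I \in \M} p^t(I)$ in the definition of $\est{l_i^t}$ exactly cancels this, giving unbiasedness. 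By linearity, $\estbf{L_I^t}$ is then an unbiased estimate of the true cumulative meta-loss $\sum_{\tau < t} \mathbf{l_I^\tau}$.

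Next I would invoke the generic exponential-weights / EXP3 analysis (e.g.\ the potential-function argument of \citet{Exp3}) on the meta-arm instance. This yields, for any fixed comparator meta-arm $I^* \in \M$,
\begin{align*}
\E\Big[\sum_{t=1}^T \mathbf{l_{I^t}^t} - \sum_{t=1}^T \mathbf{l_{I^*}^t}\Big] \le \frac{\log |\M|}{\eta} + \frac{\eta}{2}\sum_{t=1}^T \E\Big[\sum_{I \in \M} p^t(I)\, (\estbf{l_I^t})^2\Big],
\end{align*}
where $\estbf{l_I^t} = \sum_{i \in I}\est{l_i^t}$ is the one-step meta-loss estimate. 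Choosing $I^*$ to be the best meta-arm in hindsight (which, by the remark after Eq.~\eqref{eq:MetaLoss}, is the set of the $K$ best arms), the left-hand side is exactly $\E[R_T^{meta}]$. It remains to bound the two terms on the right. For the first term, $\log|\M| = \log\binom{N}{K} \le K \log N$, so $\frac{\log|\M|}{\eta} \le \frac{K \log N}{\eta}$.

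The main obstacle is bounding the variance term $\sum_{t} \E[\sum_{I} p^t(I)(\estbf{l_I^t})^2]$; this is where the dependency structure must be exploited, and a naive bound would reintroduce an exponential dependence on $K$. The key observation is that in any round at most one arm $J^t$ is revealed, so $\est{l_i^t}$ is nonzero for exactly one $i$, hence $\estbf{l_I^t} = \est{l_{J^t}^t}\,\mathbb{I}_{\{J^t \in I\}}$ is itself supported on a single arm. Writing $q^t(i) := \sum_{i \in I \in \M} p^t(I)$ for the marginal probability that arm $i$ is "covered" by the chosen meta-arm, one has $\est{l_i^t} = \frac{K l_i^t}{q^t(i)}\mathbb{I}_{\{J^t = i\}}$ with $(\est{l_i^t})^2 = \frac{K^2 (l_i^t)^2}{q^t(i)^2}\mathbb{I}_{\{J^t=i\}}$, and $\E[\mathbb{I}_{\{J^t=i\}}\mid\Fcal_{t-1}] = q^t(i)/K$. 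Plugging in and using $\sum_I p^t(I)(\estbf{l_I^t})^2 = (\est{l_{J^t}^t})^2 \sum_{J^t \in I} p^t(I) = (\est{l_{J^t}^t})^2\, q^t(J^t)$ reduces the inner expectation to $\sum_{i} q^t(i) \cdot \frac{K^2 (l_i^t)^2}{q^t(i)^2} \cdot \frac{q^t(i)}{K} = K \sum_i (l_i^t)^2 \le KN$, using $l_i^t \in [0,1]$. Summing over $t$ gives a variance contribution of at most $\frac{\eta}{2} T K N$. Combining, $\E[R_T^{meta}] \le \frac{K\log N}{\eta} + \frac{\eta K N T}{2}$, and substituting $\eta = \sqrt{\frac{\log N}{TN}}$ balances the two terms to yield $\E[R_T^{meta}] \le 2K\sqrt{TN\log N}$, as claimed. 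I would double-check the constant and the exact form of the combinatorial identity $\sum_{i \in I \in \M} p^t(I)$ versus $\sum_{I \ni i} p^t(I)$ to make sure the factors of $K$ and $N$ land correctly, since that bookkeeping is the step most prone to error.
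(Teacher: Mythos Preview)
Your proposal is correct and follows essentially the same approach as the paper: verify unbiasedness of the arm-level estimates, invoke the standard EXP3 potential bound over meta-arms with $\log|\M|\le K\log N$, and exploit that at most one $\est{l_i^t}$ is nonzero to collapse the variance term to $K\sum_i (l_i^t)^2\le KN$. Your variance calculation is slightly more direct than the paper's (you evaluate $\sum_I p^t(I)(\estbf{l_I^t})^2=(\est{l_{J^t}^t})^2 q^t(J^t)$ as a random variable and then take expectation, whereas the paper first computes $\E[(\estbf{l_I^t})^2\mid p^t]$ per meta-arm and then swaps the order of summation), and you quote the EXP3 second-order term with a factor $\eta/2$ rather than the paper's $\eta$, but these are cosmetic differences and the claimed bound $2K\sqrt{TN\log N}$ follows either way.
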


\begin{proof}[Proof Sketch]
	Using the view of meta-arms, Alg. ~\ref{alg:metaplayer} is very similar to playing EXP3 on $|\M|=\binom{N}{K}$ meta-arms, as stated earlier. It can be shown that $\est{l_i^t}$ is an unbiased estimate of the true loss $l_i^t$ for every arm $i\in [N]$ and thus, by linearity of expectation, $\estbf{l_I^t}:=\sum_{i\in I}{\est{l_i^t}}$ is an unbiased estimate of $\mathbf{l_I^t}$ for all $I\in \M$. Given this and the observation that a meta-arm $I$ is chosen proportional to $\exp(-\eta \estbf{L_I^t})$ (see Alg. ~\ref{alg:metaplayer}), we can derive the following regret bound from standard EXP3 analysis:
	\begin{align*}
	\E[R_T^{meta}] &\leq \eta \sum_{t=1}^T{\sum_{I\in \M}{\E[p^t(I) \cdot \E[(\estbf{l_I^t})^2| p^t]}} + \frac{K \log N}{\eta}
	\end{align*}

The variance term $\E[(\estbf{l_I^t})^2|p^t]$ can be simplified by observing that $\est{l^t}=(\est{l_1^t},...,\est{l_N^t})$ has at most one non-zero entry, which implies that for any $j\neq k$, $\est{l_j^t}\cdot \est{l_k^t}=0$:
\begin{align*}
\E[(\estbf{l_I^t})^2 &|p^t] \\
&= \E[(\sum_{i\in I}{\est{l_i^t}})^2|p^t] \tag{by definition} \\
&= \sum_{j,k\in I}{\E[\est{l_j^t} \cdot \est{l_k^t} | p^t]} \tag{Linearity of expectation} \\
&= \sum_{i\in I}{\E[(\est{l_i^t})^2 | p^t]} \tag{all terms for $j\neq k$ cancel} \\
&= \sum_{i\in I}{\Bigg( \frac{K \cdot l_i^t}{\sum_{i\in Z\in \M}{p^t(Z)}}\Bigg)^2 \cdot Pr[J^t=i]} \\
&= \sum_{i\in I}{\Bigg( \frac{K \cdot l_i^t}{\sum_{i\in Z\in \M}{p^t(Z)}}\Bigg)^2 \cdot \underbrace{Pr[i\in I^t]}_{=\sum_{i\in Z\in \M}{p^t(Z)}} \cdot \frac{1}{K} } \\
&= K \sum_{i\in I}{\frac{(l_i^t)^2}{\sum_{i\in Z\in \M}{p^t(Z)}}}
\end{align*}

By plugging this result back into the regret expression and rearranging the summation terms, we conclude that $\E[R_T^{meta}]\leq 2 K \sqrt{T N \log N}$ for $\eta=\sqrt{\frac{\log N}{T N}}$. 
\end{proof}
For a detailed proof, we refer the reader to Section \ref{app:regret_metaplayer} in Appendix A.

\paragraph{Together as one K-Metaplayer}
Let us turn our attention back to the $K$ players in an idealized setting with \emph{full communication}. How do the players need to play in order to behave as the K-Metaplayer in Alg.~\ref{alg:metaplayer}?

We suggest to do so by assigning roles as follows:
Player 1 takes the role of a  global \emph{coordinator}, who decides which arm each of the $K$ players should pick. She samples $K$ arms in each step using the metaplayer algorithm, chooses one out of those $K$ u.a.r. for herself and assigns the rest to the other players. She then communicates to the other players what arms she has chosen for them. Players $2,...,K$ simply behave as \emph{followers} and accept whatever arm the coordinator chooses for them. With this, they are playing exactly as the metaplayer from Algorithm \ref{alg:metaplayer} and their regret would be bounded according to Lemma \ref{lem:metaplayer_regret}.

Note that the coordinator samples $K$ arms but receives  feedback only for   \emph{one}  of them.  This is the reason behind the feedback model  considered in Alg.~\ref{alg:metaplayer}.
Also, note that in this case, the coordinator is the only player that actually ``learns" from the feedback. All other players follow the coordinator  and ignore their loss feedbacks.


\section{Multi-Player MABs without Communication}
\label{sec:kplayer}
In the previous section, we described and analyzed an idealized setting where all players can fully communicate and can therefore act  as a single metaplayer.
Then we have shown  that by assigning Player 1  the role of a global \emph{coordinator}, and the rest of the players being \emph{followers}, we can exactly imitate the metaplayer algorithm. This strategy however,  requires full communication. Here, we show how to build on these  ideas  to devise an algorithm for the realistic ``no-communication"   setting.  Our $\CnP$ (Coordinate \& Play) algorithm is depicted in Figure~\ref{fig:kplayer_single_figure}, as well as in Alg.~\ref{alg:coordinator}, and~\ref{alg:follower}.
Its guarantees are stated in Theorem~\ref{thm:kplayer}.
And in Section~\ref{sec:Efficient} we  discuss an efficient implementation of our method.

Our method builds on top of the idealized scheme, with two additional ideas. 
\paragraph{Infrequent switches:} 
In order to give players the opportunity to coordinate, we prevent  them from frequently switching their decisions. 
Concretely, as is described in 
Fig~\ref{fig:kplayer_single_figure}, instead of sampling $K$ arms in each round, the coordinator (as well as the followers)  keeps the same $K$ arms for a block of $\tau$ consecutive rounds. The coordinator (Alg.~\ref{alg:coordinator}) runs a \emph{blocked version} of the K-metaplayer algorithm (Alg.~\ref{alg:metaplayer}):
In each block, the coordinator samples an arm according to Alg.~\ref{alg:metaplayer}, but stays on that arm for the entire block.  Then she feeds  the \emph{average} loss of that arm back into Alg.~\ref{alg:metaplayer} to update her loss estimates. 
While these blocks enable coordination,  they cause degradation to the regret guarantees \citep{DekelBlocking}. We elaborate on that in the analysis.

\begin{figure}[t]
	\centering
	\includegraphics[width=0.7\textwidth]{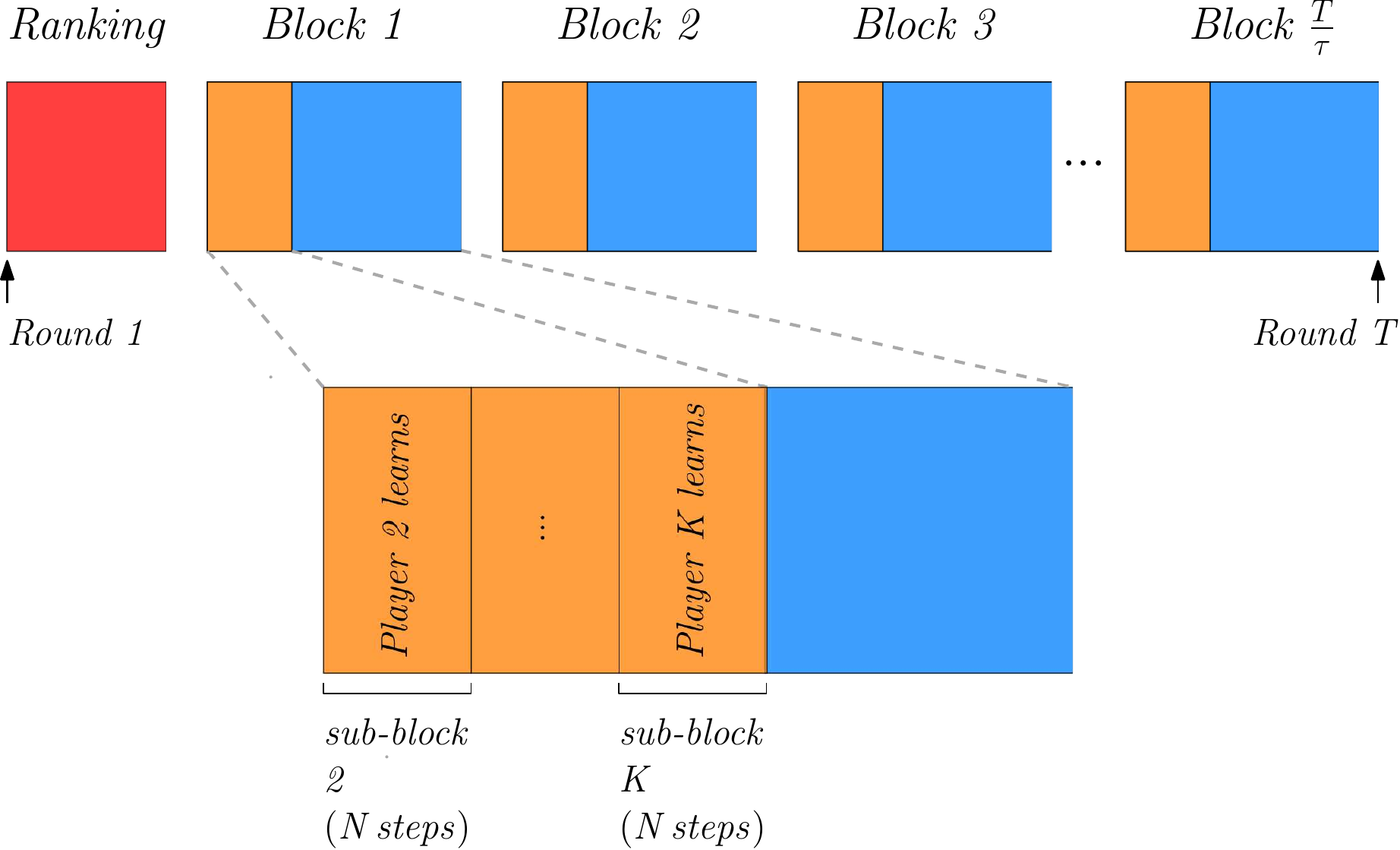}
	\caption{Illustration of the K-player algorithm. The upper part illustrates the timeline of the algorithm and the lower part shows the close-up view of a single block in the algorithm. \textit{Coordinate} phases are marked in orange and \textit{Play} phases are shown in blue. At the beginning of the algorithm, the players compute a ranking (red). This will be discussed further below.}
	\label{fig:kplayer_single_figure}
\end{figure}

\paragraph{Coordinate and Play}
We depict the timeline of our algorithm in Figure \ref{fig:kplayer_single_figure}.
As can be seen, we divide each block into two phases: \emph{Coordinate} phase (orange), and  \emph{Play} phase (blue).

At the beginning of each block, the coordinator picks $K$ arms according to the blocked version of the K-metaplayer algorithm. Then, during  \emph{Coordinate}, the coordinator assigns an arm to each of the $K-1$ followers.
Thus, the Coordinate phase is further divided into $K-1$ sub-blocks $2,...,K$ (Fig.~\ref{fig:kplayer_single_figure}, bottom part).
At sub-block $k$, the $k$'th follower gets assigned to an arm by a  protocol that uses collisions as a primitive, yet efficient, manner of communication.

This protocol  (see Alg.~\ref{alg:coordinator}, and Alg.~\ref{alg:follower}) is very simple: during sub-block $k$, the coordinator stays on the arm for player $k$ (a follower). Player $k$ tries out all arms in a round-robin fashion, until she collides with the coordinator. At this point, player $k$ learns her arm and the coordinator can repeat this procedure with the other players. While player $k$ is trying to find her arm, all other followers will stay quiet. Since each follower needs at most $N$ trials, all followers will have learnt their arms after $(K-1)\cdot N$ rounds.

After \emph{Coordinate}, each player has learnt her arm. During \emph{Play}, all players stay on their arms for the remaining steps of the block. At the end of the block, the coordinator uses the feedback she has collected in order to update her loss estimates.

If $T$ is not divisible by $\tau$, the players will play for blocks $1,...,\floor{\frac{T}{\tau}}$ and choose arms uniformly at random for the remaining steps. Since there will be less than $\tau$ steps left, this will increase the regret by at most $K \tau$.

\begin{algorithm}[tb]
	\caption{\CnP~Coordinator algorithm } \label{alg:coordinator}
	\begin{algorithmic}[1]
		\State {\bfseries Input:} $\eta$, block size $\tau$
		\For{block $b=1$ {\bfseries to} $\frac{T}{\tau}$}
		\Statex \textcolor{blue}{\text{Choose K arms according to the metaplayer}}
		\State Set cumulative loss estimate $\estbf{L_I^b} = \sum_{t=1}^{b-1}{\sum_{i\in I}}{\est{l_i^t}}$, for all meta-arms $I\in \M$
		\State Set probability $p^b(I) = \frac{e^{-\eta \estbf{L_I^b}}}{\sum_{J\in \M}{e^{-\eta \estbf{L_J^b}}}}$, for all meta-arms $I\in \M$
		\State Choose meta-arm $\widetilde{J^b}=\lbrace \widetilde{J_1^b},...,\widetilde{J_K^b}\rbrace$ at random according to $P^b = (p^b(I))_{I\in \M}$
		\State Let $\widetilde{I^b}=(\widetilde{I_1^b},...,\widetilde{I_K^b})$ be a uniform random permutation of $\widetilde{J^b}$
		\Statex \textcolor{blue}{\text{Coordinate}}
		\For{sub-block $r=2$ {\bfseries to} $K$} \Comment{Each sub-block has exactly $N$ steps}
		\State Choose $I_1^t=\widetilde{I_{r}^b}$  in steps $t$ until collision
		\State After collision, choose $I_1^t = \widetilde{I_1^b}$ for the remaining steps $t$ of sub-block $r$
		\EndFor
		\Statex \textcolor{blue}{\text{Play}}
		\State Choose arm $I_1^t=\widetilde{I_1^b}$ for remaining steps $t$ of block $b$
		\Statex \textcolor{blue}{\text{Feed average loss of arm $\widetilde{I_1^b}$ back to the metaplayer}}
		\State Set $\widehat{l_i^b} = \sum_{t=(b-1)\cdot \tau + 1}^{b\cdot \tau}{\mathbb{I}_{\lbrace I_1^t=i \rbrace} \cdot l_i^t}$, for all arms $i\in [N]$
		\State Set loss estimate $\est{l_i^b} = K \cdot \frac{\frac{1}{\tau}\widehat{l_i^b}}{\sum_{i\in I \in \M}{p^b(I)}} \cdot \mathbb{I}_{\lbrace \widetilde{I_1^b} = i\rbrace}$, for all arms $i\in [N]$
		\EndFor
	\end{algorithmic}
\end{algorithm}

\begin{algorithm}[tb]
	\caption{\CnP~Follower algorithm}
	\label{alg:follower}
	\begin{algorithmic}[1]
		\State {\bfseries Input:} block size $\tau$, rank $r$
		\For{block $b=1$ {\bfseries to} $\frac{T}{\tau}$}
		\Statex \textcolor{blue}{\text{Coordinate}}
		\State Stay quiet during sub-blocks $2,...,r-1$  \Comment{Each sub-block has exactly $N$ steps}
		\State During sub-block $r$, explore arms in a round-robin fashion until collision. $\widetilde{I_r^b}$ is the arm on which the collision occurred. Choose $\widetilde{I_r^b}$ for remaining steps of sub-block $r$.
		\State Stay quiet during remaining sub-blocks $r+1,...,K$
		\Statex \textcolor{blue}{\text{Play}}
		\State Choose $I_r^t=\widetilde{I_r^b}$ for remaining steps $t$ of block $b$
		\EndFor
	\end{algorithmic}
\end{algorithm}

\paragraph{Ranking}
So far we assumed that the players have unique ranks in $[K]$. They can compute the ranking by using a scheme that we adopt from \citet{MusicalChairs}.
The idea is playing a "Musical Chairs game"  on the \emph{first} $K$ arms $\{1,\ldots,K\}$ for $T_R$ rounds: A player chooses arms uniformly at random until she chooses an arm $i$ without colliding. At this point, that player becomes the owner of arm $i$ and will receive the rank $i$. This player $i$ then just stays on arm $i$ for the remaining of the $T_R$ rounds. We will set $T_R$ in a way that the ranking completes successfully with high probability.

\begin{algorithm}[tb]
	\caption{\CnP~Ranking}
	\label{alg:ranking}
	\begin{algorithmic}[1]
		\State {\bfseries Input:} $T_R$
		\For{$t=1$ {\bfseries to} $T_R$}
		\State Choose arm $r\in_{u.a.r.} [K]$
		\If{I did not collide} \Comment{My rank is $r$}
		\State Choose arm $r$ for the remaining of the $T_R$ rounds and return.
		\EndIf
		\EndFor
	\end{algorithmic}
\end{algorithm} $ $

The next theorem states the guarantees of our $\CnP$ Algorithm.
\begin{theorem}\label{thm:kplayer}
	Suppose that the $K$ players use our $\CnP$ Algorithm. Meaning,
	they first compute a ranking using Algorithm \ref{alg:ranking} with $T_R = K\cdot e\cdot\log T$. Afterwards, player 1 will act as coordinator and play according to Algorithm \ref{alg:coordinator}. The other players will behave as followers and run Algorithm \ref{alg:follower}. Then, the expected regret of the $K$ players is bounded as follows,
	$$
	\E [R_T] ~\leq~ 
	4 K^{4/3}N^{2/3}(\log N)^{1/3}T^{2/3} + 2K^2\cdot e\cdot\log T~,
	$$
	for block size $\tau=\Big(\frac{K^2 N T}{\log N}\Big)^{1/3}$ and $\eta=\sqrt{\frac{\log N}{\frac{T}{\tau} N}}$~.
\end{theorem}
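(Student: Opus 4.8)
I would split $\E[R_T]$ according to how the algorithm spends its rounds: the $T_R$ ranking rounds, the \emph{Coordinate} sub-blocks, the \emph{Play} sub-blocks, and the at most $\tau$ leftover rounds at the end. For the ranking, I would first show Algorithm~\ref{alg:ranking} assigns the $K$ players distinct ranks with probability at least $1-K/T$: in any Musical-Chairs round an unranked player sits on a still-free chair with probability at least $\tfrac1K(1-\tfrac1K)^{K-1}\ge\tfrac1{eK}$, so after $T_R=Ke\log T$ rounds a fixed player is still unranked with probability at most $(1-\tfrac1{eK})^{T_R}\le e^{-\log T}=1/T$, and a union bound finishes it. The ranking rounds themselves cost at most $KT_R=K^2e\log T$ (each player loses $\le1$ per round), and the failure event (probability $\le K/T$, on which I bound the regret trivially by $KT$) adds at most $K^2$; together at most $2K^2e\log T$. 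From here I condition on a successful ranking. For the Coordinate phase I would first verify the protocol is correct: during sub-block $r$ the coordinator stays on $\widetilde I_r^b$ while follower $r$ cycles through all $N$ arms (the others quiet), so their collision is forced within the sub-block's $N$ steps and occurs on $\widetilde I_r^b$, no spurious collisions arise, every follower learns its arm, and the Play phase is collision-free. Since the Coordinate phase is exactly $(K-1)N$ rounds in each of the $B=\lfloor (T-T_R)/\tau\rfloor\le T/\tau$ blocks and each of the $K$ players loses $\le 1$ per such round, the total Coordinate loss is $\le K(K-1)NB$; the leftover rounds contribute $\le K\tau$.

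\textbf{The core: the Play phase via a blocked metaplayer.} I would view each block as one round of a $B$-round meta-game on which Algorithm~\ref{alg:coordinator} runs the metaplayer of Algorithm~\ref{alg:metaplayer}. Write $g_i^b$ for the average of $l_i^t$ over the Play rounds of block $b$ and $\bar g_i^b:=\E[\est{l_i^b}\mid\mathcal F_{b-1}]$ for the effective per-arm loss estimated in block $b$. The key structural fact is that the coordinator sits on her committed arm $\widetilde I_1^b$ on all Play rounds and all but at most $(K-1)N$ of the remaining rounds of the block, so $\est{l_i^b}$ is conditionally unbiased for a $\bar g_i^b$ with $|\bar g_i^b-g_i^b|\le (K-1)N/\tau$, while $\widehat{l_i^b}/\tau\in[0,1]$ keeps the variance term unchanged. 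Replaying the EXP3 computation behind Lemma~\ref{lem:metaplayer_regret} with $T$ replaced by $B$ (variance term $\le \eta KNB$, comparator term $\le K\log N/\eta$, so with $\eta=\sqrt{\log N/(BN)}$ one gets $2K\sqrt{BN\log N}$) yields, against the fixed optimal meta-arm $I^\ast$,
\[
\E\Big[\sum_{b=1}^{B}\sum_{i\in \widetilde J^b}\bar g_i^b\Big]\;-\;\sum_{b=1}^{B}\sum_{i\in I^\ast}\bar g_i^b\;\le\;2K\sqrt{BN\log N}.
\]
Scaling by $\tau$, using $g_i^b-\tfrac{(K-1)N}{\tau}\le\bar g_i^b\le g_i^b$ together with $\tau\sum_b\sum_{i\in I^\ast}g_i^b\le\sum_{t}\mathbf{l}_{I^\ast}^t=\mathrm{OPT}$, I conclude that the total loss of the $K$ players over all Play rounds is at most $\mathrm{OPT}+2K\tau\sqrt{BN\log N}+O(K^2NB)$ in expectation, the last term being the accumulated estimation bias ($\le(K-1)N/\tau$ per arm per block, hence $\le K(K-1)NB$ after $\tau$-scaling).

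\textbf{Assembling, optimizing, and the main obstacle.} Summing the four contributions gives $\E[R_T]\le 2K^2e\log T+O(K^2NB)+2K\tau\sqrt{BN\log N}+K\tau$; substituting $B\le T/\tau$ turns the metaplayer term into $\le 2K\sqrt{\tau NT\log N}$ and the coordination/bias term into $\le O(K^2NT/\tau)$, and these two are balanced — up to the constants yielding the stated bound — by $\tau=(K^2NT/\log N)^{1/3}$, which also makes $K\tau$ lower order since $K<N<T$; each dominant term then equals $2K^{4/3}N^{2/3}(\log N)^{1/3}T^{2/3}$, for the total $4K^{4/3}N^{2/3}(\log N)^{1/3}T^{2/3}+2K^2e\log T$ (and $\eta$ above is exactly the $\eta$ of Lemma~\ref{lem:metaplayer_regret} applied with $B=T/\tau$ rounds). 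I expect the main obstacle to be the core step: the coordinator never observes the true block-average loss of her arm, only a version averaged over part of the block, so the losses feeding EXP3 are biased; one must run the regret bound against a \emph{fixed} comparator using only conditional unbiasedness of $\est{l_i^b}$ for the adaptive quantity $\bar g_i^b$ — which is legitimate — and verify that the accumulated bias is of the same order as the unavoidable Coordinate-phase collision cost, so the $T^{2/3}$ rate (and the particular $\tau$, which trades the blocking blow-up $\tau$ against $B=T/\tau$ and against the $\Theta(K^2NT/\tau)$ coordination cost) is not degraded.
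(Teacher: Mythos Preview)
Your proposal is correct and follows essentially the same route as the paper: the same ranking/Coordinate/Play/leftover decomposition, the same appeal to the metaplayer bound of Lemma~\ref{lem:metaplayer_regret} on $B\le T/\tau$ meta-rounds, and the same balancing of $2K\sqrt{\tau NT\log N}$ against $K^2NT/\tau$ via $\tau=(K^2NT/\log N)^{1/3}$. The one place you go further than the paper is in explicitly isolating the $O((K-1)N/\tau)$ bias in the coordinator's fed-back block loss caused by the Coordinate-phase rounds; the paper instead invokes the blocking theorem of \citet{DekelBlocking} as a black box and then adds the Coordinate cost $K(K-1)N\cdot T/\tau$ on top, in effect absorbing that bias into the very same $K^2NT/\tau$ term you track separately, so the accounting and the final constants come out the same.
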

As we mentioned earlier, our results apply to the oblivious case. Nevertheless, using a  standard mixing technique  \cite{Exp3} one can extend these results also for non-oblivious adversaries.

\begin{proof}[Proof of Theorem~\ref{thm:kplayer}]
By setting the length of the ranking phase $T_R = K\cdot e\cdot\log T$, the ranking completes after $T_R$ rounds with probability at least $1-\frac{K}{T}$ (see Section \ref{app:ranking_proof} in Appendix A for the derivation).
	
\paragraph{Case 1: Ranking unsuccessful}
With probability at most $\frac{K}{T}$, the players do not succeed in computing a ranking. The worst regret that they could obtain in the game is $K T$.
	
\paragraph{Case 2: Ranking successful}
In the idealized setting with communication from the previous section \ref{sec:metaplayer}, the \text{Coordinate} phase would not be necessary. In that case, Algorithms \ref{alg:coordinator} and \ref{alg:follower} together are just the result of applying the blocking technique to the K-Metaplayer algorithm \ref{alg:metaplayer}. This can be analyzed using the following Theorem from \citet{DekelBlocking},

\begin{theorem}\cite{DekelBlocking}\label{thm:dekel_blocking}
	Let $\A$ be a bandit algorithm with expected regret bound of $R(T)$.
	Then using the blocked version of $\A$ with a block of size $\tau$ gives a regret bound of 
	$\tau R(T/\tau) +\tau$.
\end{theorem}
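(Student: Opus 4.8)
The plan is to prove the statement by exhibiting an exact equivalence between the blocked algorithm running in the original $T$-round game and the base algorithm $\A$ running in a \emph{reduced} game of $T/\tau$ rounds whose loss vectors are block-averaged losses. First I would fix notation: partition $[T]$ into $\floor{T/\tau}$ full blocks of size $\tau$, and for each block $b$ and arm $i$ define the averaged loss $\bar l_i^b := \frac{1}{\tau}\sum_{t=(b-1)\tau+1}^{b\tau} l_i^t$. Since every $l_i^t \in [0,1]$, each $\bar l_i^b \in [0,1]$ as well, so the averaged sequence is itself a legitimate bandit loss sequence on which $\A$'s guarantee $R(\cdot)$ applies verbatim.

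The crux is to observe that the blocked version of $\A$ is a \emph{faithful simulation} of $\A$ on this averaged sequence. In block $b$ the blocked algorithm queries $\A$ for an arm $J^b$, plays $J^b$ for all $\tau$ rounds of the block, and then feeds back the average $\bar l_{J^b}^b = \frac{1}{\tau}\sum_{t \in \text{block }b} l_{J^b}^t$. This is exactly the bandit feedback $\A$ would receive in round $b$ of the reduced game, since $\A$ only ever observes the loss of the arm it selects. Because the adversary is oblivious, the averaged sequence is determined in advance, so the law of the sequence of arms $(J^b)_b$ produced inside the blocked algorithm coincides with the law of the arms $\A$ would produce in a genuine reduced bandit game.

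Next I would relate the two regrets pathwise. The total loss paid by the blocked algorithm over the full blocks is $\sum_b \sum_{t \in \text{block }b} l_{J^b}^t = \tau \sum_b \bar l_{J^b}^b$, and the loss of any fixed arm $i$ is likewise $\tau \sum_b \bar l_i^b$; taking the best $i$ in hindsight, the original-game regret accrued over the full blocks equals $\tau$ times $\A$'s regret on the $T/\tau$ averaged rounds. This identity holds for every realization of $\A$'s internal randomness, so taking expectations and invoking $\A$'s guarantee yields a bound of $\tau R(T/\tau)$ for the portion of the horizon covered by full blocks.

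Finally I would account for the leftover rounds when $\tau$ does not divide $T$. There are fewer than $\tau$ such rounds, and each contributes at most $1$ to the regret because losses lie in $[0,1]$, so their combined contribution is at most $\tau$. Adding this to the full-block bound gives $\tau R(T/\tau) + \tau$, as claimed. I expect the main subtlety to be the simulation step: one must verify that the synthetic feedback $\bar l_{J^b}^b$ is information-theoretically identical to the feedback in a real bandit game on the averaged losses, so that $\A$'s regret guarantee — established for that clean game — transfers unchanged to the simulated one. Once this is secured, the pathwise loss-scaling identity and the leftover-block bookkeeping are routine.
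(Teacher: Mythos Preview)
The paper does not actually prove this theorem: it is quoted from \cite{DekelBlocking} as a black-box tool inside the proof of Theorem~\ref{thm:kplayer}, so there is no in-paper argument to compare against. Your proposal is the standard and correct proof of the blocking reduction --- simulate $\A$ on the block-averaged loss sequence (valid because averages stay in $[0,1]$ and the oblivious adversary fixes them in advance), use the pathwise identity that full-block regret equals $\tau$ times the reduced-game regret, and absorb the at most $\tau$ leftover rounds trivially --- and it matches what the cited reference does.
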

The term $\tau$ above accounts for the additional regret in case $T$ is not divisible by $\tau$. Since we have $K$ players, we will replace that term by $K \tau$. Hence, by applying the above theorem to the regret bound from Lemma \ref{lem:metaplayer_regret}, we obtain that the regret of the $K$ players in a setting with communication would be $C\cdot T^{1/2}\tau^{1/2}+K \tau$ for $C=2 K \sqrt{N \log N}$.

In the real setting without communication, the \text{Coordinate} phase is needed and takes ${(K-1)  \cdot N}$ steps. During the \text{Coordinate} phase in one block, each player adds at most $(K-1) \cdot N$ to the total regret, either by staying quiet (loss 1) or by not choosing the optimal arm (round-robin exploration). Thus, the \text{Coordinate} phase increases the total regret by at most $\frac{T}{\tau} \cdot (K-1) \cdot N \cdot K$.

Finally, the ranking algorithm adds $K\cdot T_R=K^2\cdot e\cdot\log T$ to the regret. Put together, the expected regret of the $K$ players, assuming that ranking was successful (we denote this event  by $\eS$), is bounded as follows:
\begin{align*}
\E[&R_T | \eS]\\
& \leq
\underbrace{C \cdot T^{1/2}\tau^{1/2} + K\tau}_{\text{Thm. \ref{thm:dekel_blocking} + Lemma  \ref{lem:metaplayer_regret}}} + \underbrace{\frac{T}{\tau}\cdot K^2 N}_{\text{Coordinate}} 
+ \underbrace{K^2\cdot e\cdot\log T}_{\text{Ranking}} \\
&\leq 3 \cdot K^{4/3}N^{2/3}(\log N)^{1/3}T^{2/3} + \frac{K^{5/3}N^{1/3}}{(\log N)^{1/3}} T^{1/3} \\
&\quad
+K^2\cdot e\cdot\log T \\ 
&\leq 4 \cdot K^{4/3}N^{2/3}(\log N)^{1/3}T^{2/3} + K^2\cdot e\cdot\log T 
\end{align*}
where in the second line  we  use  $C=2 K \sqrt{N \log N} = (4K^2 N \log N)^{1/2}$ which holds  by Lemma \ref{lem:metaplayer_regret}; we also take  $\tau = \Big(\frac{K^2 \cdot N \cdot T}{\log N}\Big)^{1/3}$ and $\eta = \sqrt{\frac{\log N}{\frac{T}{\tau}N}}$. The last line  uses $K<T$.

Combining the results from cases 1 and 2 with $T_R=K\cdot e\cdot\log T$, gives the following bound:
\begin{align*}
\E[R_T] &= \underbrace{Pr[\eS]}_{\leq 1}\cdot \E[R_T | \eS] 
+ \underbrace{Pr[\eS^c]}_{\leq \frac{K}{T}} \cdot \underbrace{\E[R_T | \eS^c]}_{\leq K \cdot T} \\
&\leq 4 \cdot K^{4/3}N^{2/3}(\log N)^{1/3}T^{2/3} + K^2\cdot (e\cdot\log T +1) 
\end{align*}
where $\eS$ denotes the event where ranking is successful, and $\eS^c$ is its complement.
\end{proof}

\textbf{Remark:} So far we assumed that the players need to stay quiet during the Coordinate phase, but this assumption is actually not necessary. We will discuss in section \ref{app:quiet} in Appendix A how this assumption can be relaxed.

\subsection{Efficient Implementation}
\label{sec:Efficient}
In each block $b$ of algorithm \ref{alg:coordinator}, the coordinator samples a meta-arm $\widetilde{J^b} \in \M$ according to the probability distribution
\begin{align}
Pr[\widetilde{J^b} = I] &\propto \exp{(-\eta \estbf{L_I^b})} \nonumber\\
&= \exp{(-\eta \sum_{i\in I}{\est{L_i^b}})} \tag{$\est{L_i^b}=\sum_{t=1}^{b-1}{\est{l_i^t}}$} \nonumber \\
&= \prod_{i\in I}{\exp(-\eta \est{L_i^b})}
\label{eq:coord_prob}
\end{align}
As the number of possible outcomes is $|\M|=\binom{N}{K} = \Theta(N^K)$, computing the probability for each meta-arm naively would be expensive. Similarly, computing the marginal probability $\sum_{i\in I \in \M}{Pr[\widetilde{J^b}=I]}$ for an arm $i$ that is to be updated has cost $\Theta(N^{K-1})$ when done naively. By taking advantage of the structure in our probability distribution, we can show that sampling and marginalization can be made more efficient using a concept called \emph{K-DPPs}.

\emph{DPPs (Determinantal Point Processes)} \citep{Dpp} are probability distributions $\mathcal{P}: 2^\Y \rightarrow [0,1]$ (where $\Y=[N]$ is a fixed ground set and $2^\Y$ is the power set over $\Y$) that exhibit a particular structure: $\mathcal{P}$ can be specified in terms of the determinant of a so-called $N\times N$ kernel matrix. What makes DPPs appealing is that they allow us to sample from $\mathcal{P}$, i.e., subsets of $\Y$, in an efficient way, even if the outcome space is large.

While the output of a DPP can be \textit{any} subset of $\Y$, K-DPPs allow us to model particular distributions over the set of subsets of size \textit{exactly} $K$. For a K-DPP $\mathcal{P}$ with kernel matrix $L$, the probability of sampling a subset $Y$ of size $K$ is given by
\begin{align*}
\mathcal{P}(Y) &= \frac{\det (L_Y)}{\sum_{Y'\subseteq [N], |Y'|=K}{\det (L_{Y'})}}  \tag{Def. 5.1 of \citet{Dpp}}
\end{align*}
where $L_Y$ is the submatrix of $L$ indexed by the rows and columns in $Y$. A probability distribution that can be modeled as a K-DPP as specified above allows efficient sampling.

Going back to the coordinator, we observe that she samples a \textit{set} of $K$ arms from $[N]$. Furthermore, the probability $\mathcal{P}(I)$ of sampling a subset (meta-arm) $I\in \M$ is a function of the arms $i\in I$ as can be seen in Eq. ~\eqref{eq:coord_prob}. Since the probability for $I$ can be written as a product over (distinct) arms $i\in I$, this enables us to show that $\mathcal{P}(I)$ can be written as a determinant of a \emph{diagonal} matrix. Hence, with the following kernel matrix $L$, we obtain a K-DPP that models our coordinator's distribution as specified in Eq. ~\eqref{eq:coord_prob}:
\begin{align*}
L^b_{i,j} &= \begin{cases}
e^{-\eta \est{L_i^b}}, i = j \\
0, & i\neq j \text{ (diagonal matrix)}
\end{cases}
\end{align*}

By applying the guarantees provided by K-DPPs, the coordinator can implement the sampling efficiently as stated in Lemma \ref{lem:kdpps}.
$ $\newline

\begin{lemma}\label{lem:kdpps}
	Using K-DPPs, the cost of sampling a meta-arm in algorithm \ref{alg:coordinator} can be bounded by $O(N K)$ for any block. Similarly, the cost of computing the marginal probability $\sum_{i\in I\in \M}{p^b(I)}$ for a fixed arm $i$ is $O(N K)$. 
\end{lemma}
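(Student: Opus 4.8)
The plan is to recognize the coordinator's sampling distribution in Eq.~\eqref{eq:coord_prob} as a $K$-DPP whose kernel is exactly the diagonal matrix $L^b$ displayed above, and then to run the standard $K$-DPP sampling pipeline of \citet{Dpp}, observing that diagonality makes every generically expensive substep collapse. For the identification step, note that for a diagonal matrix $L^b$ and any $Y\subseteq[N]$ with $|Y|=K$, the principal submatrix $L^b_Y$ is itself diagonal, so $\det(L^b_Y)=\prod_{i\in Y}L^b_{i,i}=\prod_{i\in Y}e^{-\eta\est{L_i^b}}$. Normalizing over all $K$-subsets of $[N]$, the $K$-DPP with kernel $L^b$ assigns to $Y$ precisely the probability in Eq.~\eqref{eq:coord_prob}; hence drawing $\widetilde{J^b}$ is the same as producing one sample from this $K$-DPP.

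\textbf{Sampling cost.} The generic $K$-DPP sampler of \citet{Dpp} begins by eigendecomposing the kernel, an $O(N^3)$ step in general; here $L^b$ is diagonal, so its eigenvalues are $\lambda_i=e^{-\eta\est{L_i^b}}$ and its eigenvectors are the standard basis vectors $\be_1,\dots,\be_N$, and no work is needed. The sampler then (i) computes the elementary symmetric polynomials $e_0,\dots,e_K$ of $(\lambda_1,\dots,\lambda_N)$ by the usual recursion in $O(NK)$ time; (ii) selects a random size-$K$ index set of eigenvectors, each inclusion decision using an already-tabulated ratio of elementary symmetric polynomials, again $O(NK)$ overall; and (iii) runs the elementary-DPP phase on the chosen eigenvectors. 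Because the chosen eigenvectors are standard basis vectors, in phase (iii) the item-selection distribution at each of the $K$ iterations is supported exactly on the (remaining) chosen indices, and the re-orthonormalization keeps all vectors in the standard basis, so the phase simply returns the $K$ chosen indices and costs $O(NK)$. Summing the three substeps yields the claimed $O(NK)$ bound for sampling $\widetilde{J^b}$ in any block.

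\textbf{Marginal cost.} For the marginal $\sum_{i\in I\in\M}p^b(I)=\Pr[i\in\widetilde{J^b}]$ of a fixed arm $i$, diagonality gives the closed form $\Pr[i\in\widetilde{J^b}]=\lambda_i\,e_{K-1}(\lambda_{-i})/e_K(\lambda)$, where $\lambda_{-i}$ denotes the eigenvalue list with $\lambda_i$ removed. One computes $e_0,\dots,e_K$ of the full list in $O(NK)$ as before; then the leave-one-out values $e_0(\lambda_{-i}),\dots,e_{K-1}(\lambda_{-i})$ follow in a further $O(K)$ time from the one-line polynomial-division recursion $e_k(\lambda_{-i})=e_k(\lambda)-\lambda_i\,e_{k-1}(\lambda_{-i})$ with $e_0(\lambda_{-i})=1$. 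Hence one marginal costs $O(NK)$; and because of the indicator $\1{\widetilde{I_1^b}=i}$ in the loss-estimate step of Algorithm~\ref{alg:coordinator}, only the single arm $\widetilde{I_1^b}$ requires its marginal per block, so the overall per-block marginalization cost is also $O(NK)$.

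\textbf{Main obstacle.} The only genuine content is verifying that the two generically costly pieces of the $K$-DPP pipeline — the $O(N^3)$ eigendecomposition and the $O(NK^2)$-or-worse elementary-DPP projection-sampling phase — both degenerate to negligible cost under a diagonal kernel, after which everything is governed by the $O(NK)$ elementary-symmetric-polynomial dynamic program. The secondary technical point is getting the leave-one-out symmetric-polynomial recursion right for the marginal computation; this is a routine synthetic-division argument.
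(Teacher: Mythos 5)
Your proposal is correct and follows essentially the same route as the paper: identify the coordinator's distribution as a $K$-DPP with the diagonal kernel $L^b$, observe that the eigendecomposition and the elementary-DPP projection phase degenerate under diagonality so that only the $O(NK)$ eigenvector-selection step (Algorithm 8 of \citet{Dpp}, driven by elementary symmetric polynomials) remains, and express the marginal as $e^{-\eta\est{L_i^b}}Z_{K-1}^{N-i}/Z_K^N$ where both normalizers are computable in $O(NK)$. Your only deviation is obtaining $e_{K-1}(\lambda_{-i})$ by an $O(K)$ leave-one-out synthetic-division recursion rather than recomputing the $(K-1)$-DPP normalizer from scratch as the paper does; this is a minor refinement that does not change the stated bound.
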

For the analysis, please refer to Section \ref{app:kdpps} in Appendix A.

The sampling cost that we state in Lemma~\ref{lem:kdpps} is strictly more efficient compared to the general case of sampling from K-DPPs. This is possible due to the special structure of the induced DPP in our case (where the kernel matrix $L$ is diagonal).


\section{Experiments}
\label{sec:experiments}
We run experiments with three different setups and compare the performance of our K-player algorithm to the Musical Chairs algorithm (MC) from \citet{MusicalChairs}. MC is designed for a reward-setting, while our algorithm uses losses. However, losses can easily be converted to corresponding rewards and vice versa by setting $r_i^t = 1-l_i^t$, where $r_i^t$ would be the reward of arm $i$ at time $t$.

MC achieves constant regret with high probability in a stochastic setting by assuming a fixed gap between the $K$-th and ($K+1$)-th arm. It starts with a \textit{learning} phase of $T_0\in O(1)$ rounds, during which players choose arms uniformly at random and observe rewards. At the end of the phase, players estimate the mean rewards of all arms based on the collected reward feedback. In the second phase, the players play a \textit{musical chairs} game, where each player chooses among the $K$ best arms according to her own estimates. As soon as a player chooses an arm without colliding for the first time, she becomes the owner of that arm and stays there for the rest of the game.

For all experiments, we set $N=8$, $K=4$, $T=240000$, $T_R=20$ and $T_0=3000$. This value for $T_0$ was also used for the experiments by \citet{MusicalChairs}. We repeat this for 10 runs for each setup and measure the online regret $R_t$, i.e., the difference between the cumulative player loss at time $t\in [T]$ and the cumulative loss of the $K$ arms that are the $K$ best in the time period $[t]$.

For each setup, we create a plot that shows the average regret and the standard deviation (as a colored region around the average). In the plots, the blue curves show the results of MC and the green curves show the results of our algorithm. The black dashed line indicates the end of MC's learning phase ($t=T_0$).

For all of the following three setups, we also run experiments to measure the accumulated regret $R_T$ after $T$ rounds. These  can be found in section \ref{app:experiments} of Appendix A.

\paragraph{Experiment 1}\label{par:experiment1}
We use a similar setup as in the experiments section of \citet{MusicalChairs}. First, we choose $N$ mean rewards in [0,1] u.a.r. with a gap of at least 0.05 between the $K$-th and $(K+1)$-th best arms. For each arm, the rewards are then  sampled i.i.d. from a Bernoulli distribution with the selected means. The results are shown in Figure \ref{fig:experiments_stochastic}.

\begin{figure}[t]
	\centering
	\includegraphics[width=0.7\textwidth]{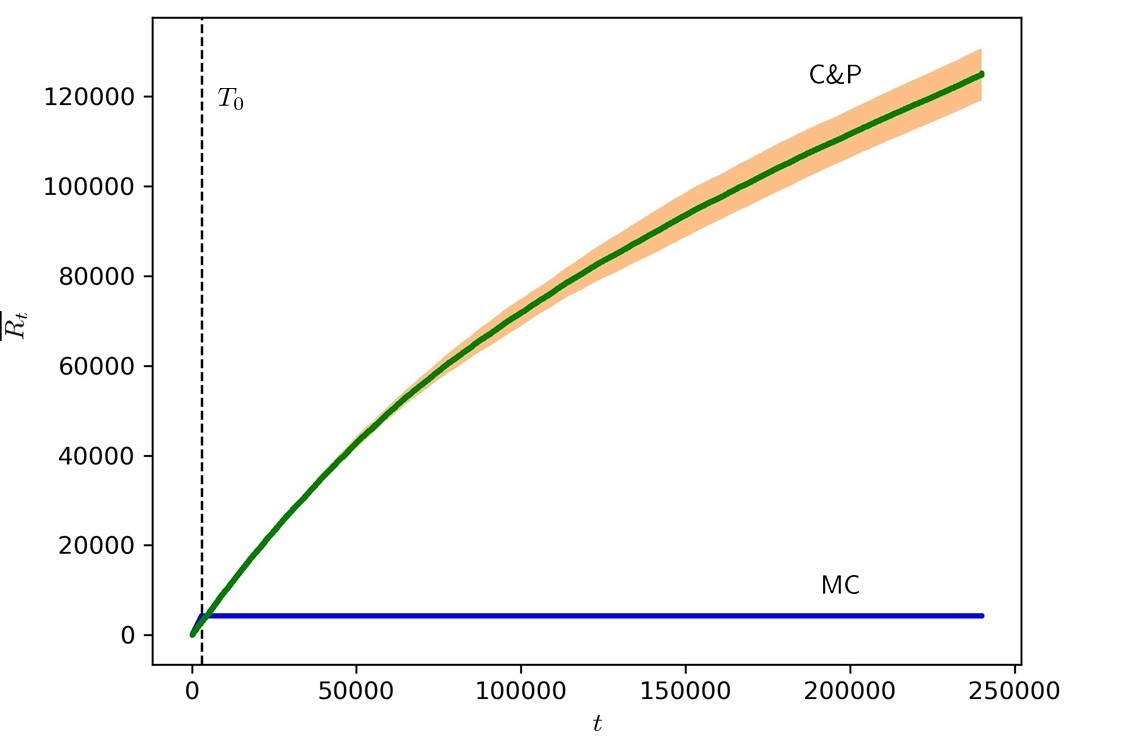}
	\caption{Results of experiment 1 (stochastic losses)}
	\label{fig:experiments_stochastic}
\end{figure}

As we can see, MC (blue curve) accumulates regret up to time $T_0$. But from that point onwards, after the musical chairs phase is done, the players are choosing optimally w.r.t. $K$ best arms in hindsight and thus their regret does not increase anymore. For our algorithm (green curve), we can see that it keeps accumulating regret until the end of the game.

\paragraph{Experiment 2}
In this experiment, we model a network scenario in which good links fail all of a sudden. Concretely, we initially set the mean loss $\mu_i$ for each arm $i$ as follows: $\mu_1=\mu_2=\mu_3=\mu_4=0.1$ and $\mu_5=\mu_6=\mu_7=\mu_8=0.3$. Each arm $i$'s losses are sampled i.i.d. from Bernoulli distribution $Ber(\mu_i)$.

At time $\frac{T}{4}$, ``link" (arm) 1 fails and its remaining losses are sampled i.i.d. from $Ber(0.9)$. After a while, at time $\frac{T}{3}$, link 3 also fails and from then on its losses are also chosen from $Ber(0.9)$. Figure \ref{fig:experiments_nonstochastic_fail} shows the result of this experiment. The red dashed lines represent the two link failures.

\begin{figure}[t]
	\centering
	\includegraphics[width=0.7\textwidth]{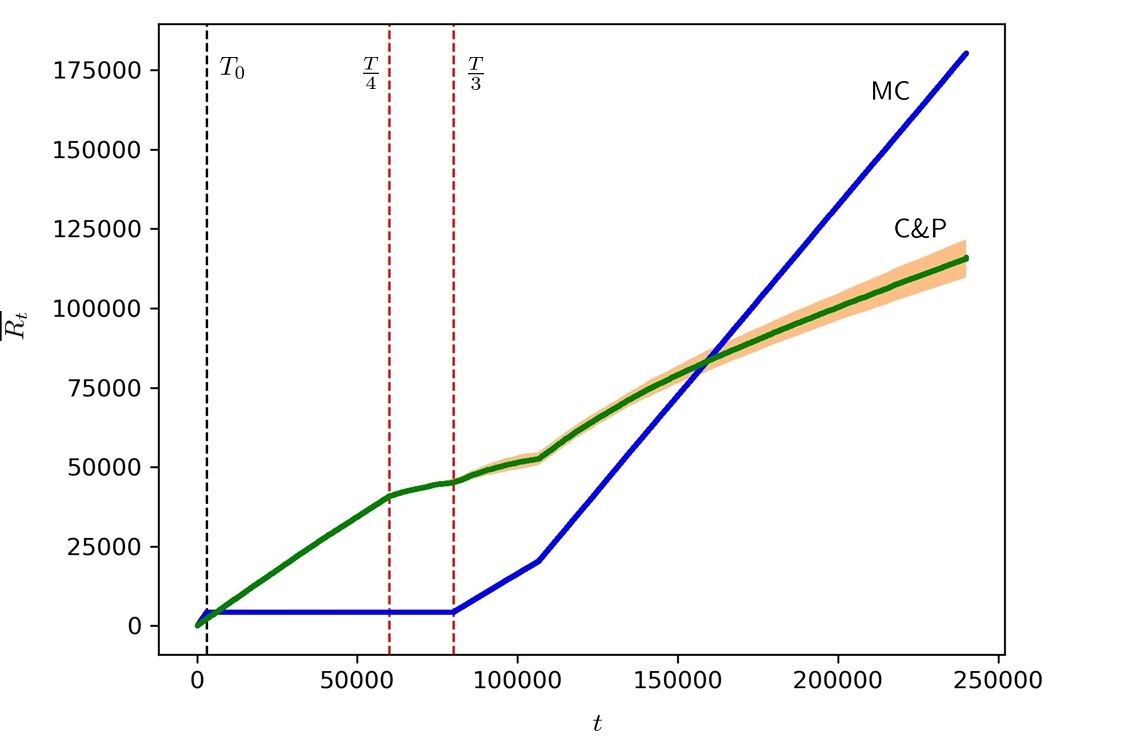}
	\caption{Results of experiment 2 (link failures). The red dashed lines indicate when the links failed.}
	\label{fig:experiments_nonstochastic_fail}
\end{figure}

As we can see, the link failures at times $\frac{T}{4}$ and $\frac{T}{3}$ happen after the learning phase in MC. Because of this, MC cannot react to them and its regret starts to increase. The green curve shows that while our algorithm initially has larger regret than MC, it is able to react to the link failures.
\paragraph{Experiment 3}
We model another network scenario, in which a bad link improves all of a sudden (or a link that was down comes up). We set the initial mean losses as follows: $\mu_1=0.9$ and $\mu_2=\mu_3=\mu_4=\mu_5=\mu_6=\mu_7=\mu_8=0.7$. As before, the losses are sampled i.i.d. from a Bernoulli distribution with the corresponding means. At time $\frac{T}{4}$, link 1 improves and its losses are from then on chosen from $Ber(0.1)$. Figure \ref{fig:experiments_nonstochastic_up} shows the results of this experiment.

\begin{figure}[t]
	\centering
	\includegraphics[width=0.7\textwidth]{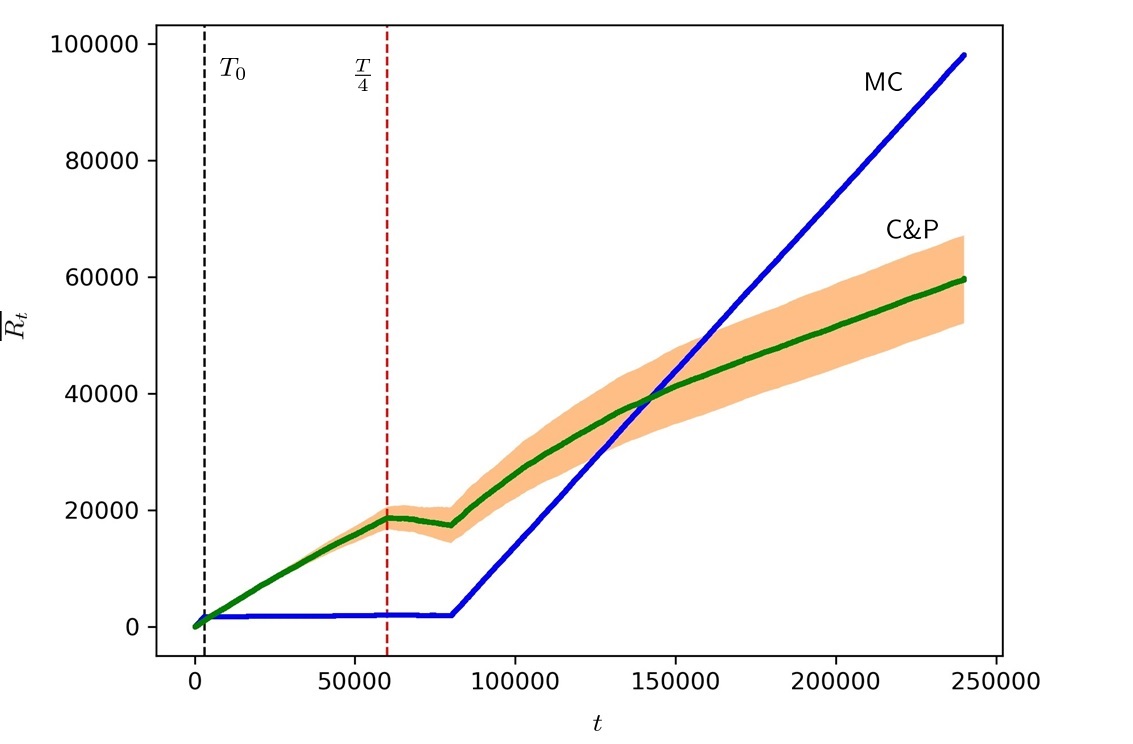}
	\caption{Results of experiment 3 (link improves). The red dashed line indicates when the link improved.}
	\label{fig:experiments_nonstochastic_up}
\end{figure}

Note that again the change in link quality happens after the learning phase in MC.


\section{Discussion and Conclusions}
\label{sec:Discussion}
We have presented an efficient algorithm for the multiplayer ``no communication" adversarial setting.
Our method obtains a regret bound of $\widetilde{O}(T^{2/3})$, and it is interesting to understand if this bound is tight or whether one can obtain a rate of ${O}(\sqrt{T})$ as in the single player setting.

In our algorithm, there is a single learner (coordinator) while all others just accept the coordinator's decisions and ignore the loss feedback that they receive. This poses a single point of failure. 	One possible way to remedy this might be to switch coordinators after each block in a round-robin fashion: Player 1 would be the coordinator in block 1, player 2 would be the coordinator in block 2 and so on.

\subsection*{Acknowledgement} 
We would like to thank Johannes Kirschner and  Mojm\'ir Mutn\'y for their valuable feedback on the manuscript.

This project has received funding from the European Research Council (ERC) under the European Union's Horizon 2020 research and innovation programme grant agreement No.~815943, as well as  from the ETH Zurich Postdoctoral Fellowship and Marie Curie Actions for People COFUND program.

\bibliography{bib}
\bibliographystyle{icml2017}

\appendix
\newpage
\section{Appendix}

\subsection{Regret analysis for Lemma \ref{lem:metaplayer_regret} (K-Metaplayer)} \label{app:regret_metaplayer}
In this section, we will prove that the metaplayer's regret with Alg. ~\ref{alg:metaplayer} is bounded by $\E[R_T^{meta}]\leq 2K\sqrt{T N \log N}$ for  $\eta=\sqrt{\frac{\log N}{T N}}$.

As stated in Lemma \ref{lem:metaplayer_regret}, by using the view of meta-arms, Alg. ~\ref{alg:metaplayer} is very similar to playing EXP3 on $|\M|=\binom{N}{K}$ meta-arms. In order to apply regret guarantees from the EXP3 analysis, we need to show that:
\begin{enumerate}
	\item A meta-arm $I\in \M$ is chosen proportional to $\exp(-\eta \estbf{L_I^t})$ at time $t$, where $\estbf{L_I^t}=\sum_{\tau=1}^{t-1}{\sum_{i\in I}{\est{l_i^\tau}}}$ is the cumulative loss estimate of $I$ at time $t$. This can be seen directly in Alg. ~\ref{alg:metaplayer}.
	\item $\estbf{l_I^t}=\sum_{i\in I}{\est{l_i^t}}$ is an unbiased estimate of the true meta-arm's loss $\mathbf{l_I^t}$ at time $t$, for any $I\in \M$ and any $t$. For this, we will first show that for any arm $i\in [N]$, $\est{l_i^t}$ is an unbiased estimate of $l_i^t$:
	\begin{align*}
	\E[\est{l_i^t}|p^t] &= K \cdot \frac{l_i^t}{\sum_{i\in Z\in \M}{p^t(Z)}} \cdot p^t(J^t=i) \\
	&= K \cdot \frac{l_i^t}{\sum_{i\in Z\in \M}{p^t(Z)}} \cdot \underbrace{Pr[i \in I^t]}_{=\sum_{i\in Z\in \M}{p^t(Z)}} \cdot \underbrace{Pr[J^t = i|i\in I^t]}_{=\frac{1}{K}} \\
	&= l_i^t
	\end{align*}
	From the law of total expectation, we can derive that $\E[\est{l_i^t}]=\E[\E[\est{l_i^t}|p^t]] = l_i^t$. Finally, by linearity of expectation (as $\estbf{l_I^t}=\sum_{i\in I}{\est{l_i^t}}$, we conclude that $\estbf{l_I^t}$ is an unbiased estimate of $\mathbf{l_I^t}$.
\end{enumerate}

Given 1. and 2., we can apply standard EXP3 regret guarantees to obtain the following bound on the metaplayer's regret:
\begin{align*}
\E[R_T^{meta}] &\leq \eta \sum_{t=1}^T{\sum_{I\in \M}{\E[p^t(I) \cdot \underbrace{\E[(\estbf{l_I^t})^2|p^t]}_{=:(*)}]}} + \frac{K \log N}{\eta} \tag{e.g. see Lecture 9, \citet{exp3_dekel}. Also, we used that $|\M|=\binom{N}{K}$.}
\end{align*}

The variance term $(*)$ can be simplified as follows:
\begin{align*}
\E[(\estbf{l_I^t})^2|p^t] &= \E[(\sum_{i\in I}{\est{l_i^t}})^2|p^t] \tag{by definition} \\
&= \sum_{j,k\in I}{\E[\est{l_j^t} \cdot \est{l_k^t} | p^t]} \tag{Linearity of expectation} \\
&= \sum_{i\in I}{\E[(\est{l_i^t})^2 | p^t]} \tag{The loss estimate at time $t$ is non-zero for at most one arm, thus all terms for $j\neq k$ cancel} \\
&= \sum_{i\in I}{\Bigg( \frac{K \cdot l_i^t}{\sum_{i\in Z\in \M}{p^t(Z)}}\Bigg)^2 \cdot Pr[J^t=i]} \\
&= \sum_{i\in I}{\Bigg( \frac{K \cdot l_i^t}{\sum_{i\in Z\in \M}{p^t(Z)}}\Bigg)^2 \cdot \underbrace{Pr[i\in I^t]}_{=\sum_{i\in Z\in \M}{p^t(Z)}} \cdot \underbrace{Pr[J^t=i|i\in I^t]}_{=\frac{1}{K}}} \\
&= K \sum_{i\in I}{\frac{(l_i^t)^2}{\sum_{i\in Z\in \M}{p^t(Z)}}}
\end{align*}

Plugging this back into our expression for the regret, we obtain:
\begin{align*}
\E[R_T^{meta}] &\leq \eta \sum_{t=1}^T{\sum_{I\in \M}{\E[p^t(I) \cdot \E[(\estbf{l_I^t})^2 | p^t]]}} + \frac{K \log N}{\eta} \\
&= \eta \sum_{t=1}^T{\sum_{I\in \M}{\E[p^t(I) \cdot K \sum_{i\in I}{\frac{(l_i^t)^2}{\sum_{i\in Z \in \M}{p^t(Z)}}}]}} + \frac{K \log N}{\eta} \\
&= K\eta \sum_{t=1}^T{\E[\underbrace{\sum_{I\in \M}{p^t(I) \sum_{i\in I}{\frac{(l_i^t)^2}{\sum_{i\in Z\in \M}{p^t(Z)}}}}}_{(\star)}]} \tag{Linearity of expectation} + \frac{K \log N}{\eta}
\end{align*}
In $(\star)$, we first sum over all meta-arms $I$ and then over all arms $i$ that are in $I$. We can instead sum over all arms $i$ first and then over all meta-arms $I$ that contain $i$. Hence, we can rewrite $(\star)$ as follows:
\begin{align*}
\sum_{I\in \M}{p^t(I)\sum_{i\in I}{\frac{(l_i^t)^2}{\sum_{i\in Z\in \M}{p^t(Z)}}}} &= \sum_{i=1}^N{\frac{(l_i^t)^2}{\sum_{i\in Z\in \M}{p^t(Z)}} \sum_{i\in I\in \M}{p^t(I)}} \\
&= \sum_{i=1}^N{(l_i^t)^2}
\end{align*}
By plugging this back into our regret expression, we get:
\begin{align*}
\E[R_T^{meta}] &\leq K\eta \sum_{t=1}^T{\E[\sum_{I\in \M}{p^t(I) \sum_{i\in I}{\frac{(l_i^t)^2}{\sum_{i\in Z\in \M}{p^t(Z)}}}}]} + \frac{K \log N}{\eta} \\
&= K\eta \sum_{t=1}^T{\E[\sum_{i=1}^N{(l_i^t)^2}]} + \frac{K \log N}{\eta} \\
&= K\eta \sum_{t=1}^T\sum_{i=1}^N{\E(\underbrace{ l_i^t}_{\in [0,1]})^2} + \frac{K \log N}{\eta}  \\
&\leq K T N \eta + \frac{K \log N}{\eta} \\
&= 2 K \sqrt{T N \log N} \tag{for $\eta = \sqrt{\frac{\log N}{T N}}$}
\end{align*}
This concludes the regret analysis for Lemma \ref{lem:metaplayer_regret}.

\subsection{Success analysis of the ranking algorithm \ref{alg:ranking}}\label{app:ranking_proof}
In this section, we will show that the players will successfully compute a ranking using algorithm \ref{alg:ranking} within $T_R=K\cdot e \cdot \log T$ rounds with probability at least $1-\frac{K}{T}$. The analysis uses ideas from the proof of Lemma 3 in \citet{MusicalChairs}.

For a fixed player, let $q^t$ be the probability that she gets a rank assigned in step $t$. $q^t$ can be bounded as:
\begin{align*}
q^t &= \sum_{i\in \text{Free}}{\frac{1}{K}\cdot (1-\frac{1}{K})^{\text{Unranked}-1}} \tag{\text{Free} = set of available arms at time $t$, \text{Unranked} = number of players who don't have a rank yet} \\
&\geq \sum_{i\in \text{Free}}{\frac{1}{K} \cdot (1-\frac{1}{K})^{K-1}} \tag{\text{Unranked} is at most $K$} \\
&\geq \frac{1}{K\cdot e} \tag{$|\text{Free}|\geq1$, $(1-\frac{1}{K})^{K-1}\geq e^{-1}$ for $K\geq 1$}
\end{align*}
The probability that she \text{doesn't} have a rank after step $t$ is thus at most:
\begin{align*}
& (1-\frac{1}{K\cdot e})^t \\
&\leq e^{- \frac{ t}{K\cdot e}} \tag{Using the inequality $1-x\leq e^{-x}$}
\end{align*}
By union bound, the probability that there's at least one player who is not fixed after $t=T_R$ rounds, is at most
\begin{align*}
K\cdot e^{- \frac{ T_R}{K\cdot e}}
\end{align*}
By setting $T_R = K\cdot e\cdot\log T$, we conclude that after $T_R$ rounds, the probability that all players have a rank, is at least
\begin{align*}
& 1-K\cdot e^{-K \cdot e \cdot \frac{\log T}{K\cdot e}} \\
&= 1 - \frac{K}{T}
\end{align*}

\subsection{Staying Quiet}
\label{app:quiet}
So far, we assumed that players need to stay quiet during the Coordinate phase of our $\CnP$ algorithm presented in section \ref{sec:kplayer}. I.e., during sub-block $k$, all players except the coordinator and player $k$, don't pick any arms. This assumption can however be relaxed using a simple modification to our protocol:

During sub-block $k\in \lbrace 2,...,K\rbrace$, all followers except player $k$ \textit{stay} on arm 1. Player $k$ explores all arms in a round-robin fashion for at most $N$ steps, until she collides on an arm $i \neq 1$. If she manages to do so, $i$ is the arm that the coordinator has chosen for her. If player $k$ doesn't collide on any other arm except on $1$, she can conclude that the coordinator has picked arm $1$ for her.

\subsection{Efficient sampling from the K-Metaplayer's distribution using K-DPPs (Lemma \ref{lem:kdpps})} \label{app:kdpps}
In this section, we will discuss how the coordinator can efficiently sample $K$ arms and compute marginal probabilities in Alg. ~\ref{alg:coordinator} using K-DPPs. We will first give some background on DPPs and K-DPPs before explaining how to use them for our case.

DPPs (Determinantal Point Processes) are probabilistic models that can model certain probability distributions of the type $\mathcal{P}: 2^{\Y} \rightarrow [0,1]$, where $\Y = [N]$ and $2^{\Y}$ is the power set of $\Y$.\footnote{In general, $\Y$ does not need to be discrete. For more information on the continous case, please refer to \citet{Dpp}.} Hence, a DPP samples subsets over a ground set $\Y$. In general, a DPP $\mathcal{P}$ is specified by a Kernel matrix (see definition 2.1 of \citet{Dpp}). L-Ensembles are a specific type of DPPs and we will focus only on those since this is what we will need for the coordinator algorithm. An L-Ensemble DPP $\mathcal{P}$ is defined by a $N\times N$-Kernel matrix $L$ as follows (see definition 2.2 of \citet{Dpp}):
\begin{align*}
\mathcal{P}(\mathbf{Y}=Y) \propto \det(L_Y) \tag{$Y\subseteq \Y$, $\mathbf{Y}$ is a random variable specifying the outcome of the DPP.}
\end{align*}
$L_Y$ is the submatrix of $L$ obtained by keeping only the rows and columns indexed by $Y$. The only restriction on $L$ is that it needs to be symmetric and positive semidefinite.

K-DPPs define probability distributions over subsets of size $K$, while the outcome set of a DPP can have any size. A K-DPP $\mathcal{P}^K$ is specified by a $N\times N$-Kernel matrix $L$ as follows (see definition 5.1 of \citet{Dpp}):
\begin{align*}
\mathcal{P}^K(\mathbf{Y}=Y) &= \frac{\det(L_Y)}{\sum_{Y' \subseteq \Y, |Y'|=K}{\det(L_{Y'})}}
\end{align*}

As before, $L$ needs to be positive and semidefinite. For DPPs and K-DPPs, sampling and marginalization can be done efficiently. Because of this, K-DPPs were appealing to us as they would allow us to efficiently sample a set of exactly $K$ \emph{distinct} arms, which is what we need for the coordinator. We will now see how we can model the coordinator's probability distribution over meta-arms as a K-DPP, i.e. we will determine how $L$ needs to be set.

Let us first recall the coordinator's probability for meta-arms. For this, let $\est{L_i^b}=\sum_{\tau=1}^{b-1}{\est{l_i^\tau}}$ denote the cumulative loss estimate for any arm $i\in \Y$ in block $b$. And let $\estbf{L_I^b}=\sum_{i\in I}{\est{L_i^b}}$ be the cumulative loss estimate for any meta-arm $I\in \M$ in block $b$. The probability that the coordinator chooses $I\in \M$ in block $b$, is:
\begin{align*}
p^b(I) &= \frac{e^{-\eta \estbf{L_I^b}}}{\sum_{J\in \M}{e^{-\eta \estbf{L_J^b}}}} \tag{see in Alg. ~\ref{alg:coordinator}}
\end{align*}
For our K-DPP, $\Y= [N]$ is the ground set and $\M$ the set of outcomes. For block $b$, let the $N\times N$-Kernel matrix $L^b$ be defined as follows:
\begin{align*}
L^b_{i,j} &= \begin{cases}
e^{-\eta \est{L_i^b}}, i = j \\
0, & i\neq j \text{ (diagonal matrix)}
\end{cases}
\end{align*}
Clearly, $L$ is symmetric and positive definite. Hence, it induces the following K-DPP $\mathcal{P}^K$:
\begin{align*}
\mathcal{P}^K(\mathbf{Y}=I) &\propto \det(L_I^b) \tag{$\mathbf{Y}$ is the random variable specifying the K-DPP's outcome, $I\in \M$} \\
&= \prod_{i\in I}{L_{i,i}^b} \tag{$L^b_I$ is a diagonal matrix} \\
&= e^{-\eta \sum_{i\in I}{\est{L_i^b}}} \\
&= e^{-\eta \estbf{L_I^b}} \tag{by definition of $\estbf{L_I^b}$}
\end{align*}
Note that a K-DPP samples \textit{subsets} of size $K$, i.e. $\mathbf{Y}$ does not contain any element twice and its size is $K$. 
Since the probabilities need to sum up to one, we conclude:
\begin{align*}
\mathcal{P}(\mathbf{Y} = I) &= \frac{e^{-\eta \estbf{L_I^b}}}{\sum_{J\in \M}{e^{-\eta \estbf{L_J^b}}}} \\
&= Pr[\widetilde{J^b}=I] \tag{Coordinator's probability of choosing meta-arm $I$}
\end{align*}

\paragraph{Cost for sampling a meta-arm}
Algorithm 1 in \citet{Dpp} describes how to sample from a general DPP. In a general DPP, the outcome can be any subset of $\Y$, its size is not necessarily equal to $K$. The algorithm consists of two phases:
\begin{enumerate}
	\item Sample eigenvectors of $L^b$. This determines the size of the DPP outcome.
	\item Use the sampled eigenvectors to actually choose a subset of $\Y$.
\end{enumerate}
For completeness, we have written down this algorithm here in Alg. ~\ref{alg:dpp}.

\begin{algorithm}[tb]
	\caption{Sampling from a DPP (Algorithm 1 in \citet{Dpp})}
	\label{alg:dpp}
	\begin{algorithmic}[1]
		\State $(v_n, \lambda_n)_{n=1}^N$ = Eigendecomposition of $L^b$
		\Statex \textcolor{blue}{Phase 1 begins}
		\State $J \gets \emptyset$
		\For{$n=1$ {\bfseries to} $N$}
		\State $J \gets J\cup \lbrace n \rbrace$ with probability $\frac{\lambda_n}{\lambda_n + 1}$
		\EndFor
		\Statex \textcolor{blue}{Phase 2 begins}
		\State $V \gets \lbrace v_n \rbrace_{n\in J}$
		\State $Y \gets \emptyset$
		\While{$|V|>0$}
		\State Select $i$ from $[N]$ with $Pr(i) = \frac{1}{|V|}\sum_{f\in V}{(v^T e_i)^2}$ \Comment{$e_i$ = i-th standard basis vector}
		\State $Y \gets Y \cup i$
		\State $V \gets V_{\bot}$, an orthonormal basis for the subspace of $V$ orthogonal to $e_i$
		\EndWhile
		Return $Y$
	\end{algorithmic}
\end{algorithm}

Since our matrix $L^b$ is diagonal, its eigendecomposition is very simple: The eigenvalues are simply the diagonal elements of $L^b$, the eigenvectors are the standard basis vectors. This means, that in phase 2, we would simply end up choosing only the elements in $J$, i.e. the returned set $Y$ is equal to $J$. Thus, we can actually finish after phase 1.

For a K-DPP, phase 1 of algorithm \ref{alg:dpp} is replaced with an algorithm that samples \text{exactly} $K$ eigenvectors. This then fixes the size of the outcome to $K$, which is what we want in a K-DPP. As we just saw, we actually only need phase 1 because our matrix $L^b$ is diagonal. Algorithm 8 in \citet{Dpp} describes how to sample exactly $K$ eigenvectors. Since this part requires $O(N K)$, we conclude that sampling a meta-arm in any block can be done in $O(N K)$.

\paragraph{Cost for computing the marginal probability of one arm}
If the K-Metaplayer decides to update arm $i$ at the end of block $b$, she needs to compute the marginal probability $\sum_{i\in I \in \M}{p^b(I)}$. We can rewrite this as follows:
\begin{align*}
\sum_{i\in I\in \M}{p^b(I)} &= \sum_{i\in I\in \M}{\frac{e^{-\eta \estbf{L_I^b}}}{Z_K^N}} \tag{Normalizer $Z_K^N := \sum_{J\in \M}{e^{-\eta \estbf{L_J^b}}}$} \\
&= \frac{1}{Z_K^N} \sum_{i\in I\in \M}{e^{-\eta \sum_{j\in I}{\est{L_j^b}}}} \tag{by definition of $\estbf{L_I^b}$} \\
&= \frac{e^{-\eta \est{L_i^b}}}{Z_K^N} \cdot \underbrace{\sum_{\substack{i\notin \lbrace i_1,...,i_{K-1} \rbrace \subseteq \Y}}{e^{-\eta \sum_{k=1}^{K-1}{\est{L_{i_k}^b}}}}}_{=:(*)}
\end{align*}

By inspecting the expression inside sum $(*)$ more closely, we observe that it looks very similar to the K-DPP that we defined before. In fact, that expression can be seen as a (K-1)-DPP over ground set $[N] \setminus \lbrace i \rbrace$ with Kernel matrix $L_{-i}^b$ consisting of $L^b$ without the i-th row and column. Therefore, the sum $(*)$ is actually just the normalization constant, let's call it $Z_{K-1}^{N-i}$, of that (K-1)-DPP. Hence, the marginal probability for arm $i$ can be written as:
\begin{align*}
\sum_{i\in I\in \M}{p^b(I)} &= \frac{e^{-\eta \est{L_i^b}}}{Z_K^N} \cdot Z_{K-1}^{N-i}
\end{align*}

From proposition 5.1 in \citet{Dpp}, we know that both $Z_K^N$ and $Z_{K-1}^{N-i}$ can be computed in $O(N K)$ each. We conclude that calculating the marginal probability for one arm in any block can be done in $O(N K)$.

\subsection{Experiments (Measuring the accumulated regret)}
\label{app:experiments}
For the three setups that we described in section \ref{sec:experiments}, we run experiments to measure the accumulated regret $R_T$ of both MC and our algorithm. We visualize the outcome in a loglog plot to compare the experimental results with our theoretical bound (Theorem \ref{thm:kplayer}).

In all three experiments, we set $N=8$, $K=4$, $T_R=25$ and $T_0=3000$ (length of MC's learning phase). For $T$, we choose $T=100000 + i\cdot 1000$, where $i\in \lbrace 0,...,1300\rbrace$. Per value of $T$, we do 10 runs and measure the regrets.

In the loglog plots, the blue dots show the average regrets of MC and the green dots the average regrets of our algorithm. The standard deviations are shown as coloured regions around the average regrets. Besides this, we fit a line on the log average regrets for each algorithm and plotted those as well. With these lines, we can compare whether the experimental results match what we expect from Theorem \ref{thm:kplayer}.

\paragraph{Experiment 1}
We use the same setup as in experiment 1 from \ref{sec:experiments}, i.e. arms with i.i.d. Bernoulli losses where the arms' means are sampled u.a.r. from [0,1] with a gap of at least 0.05 between the $K$-th and ($K+1$)-th best arm. The results are shown in Figure ~\ref{fig:experiments_acc_stochastic}.

\begin{figure}[t]
	\centering
	\includegraphics[width=0.7\textwidth]{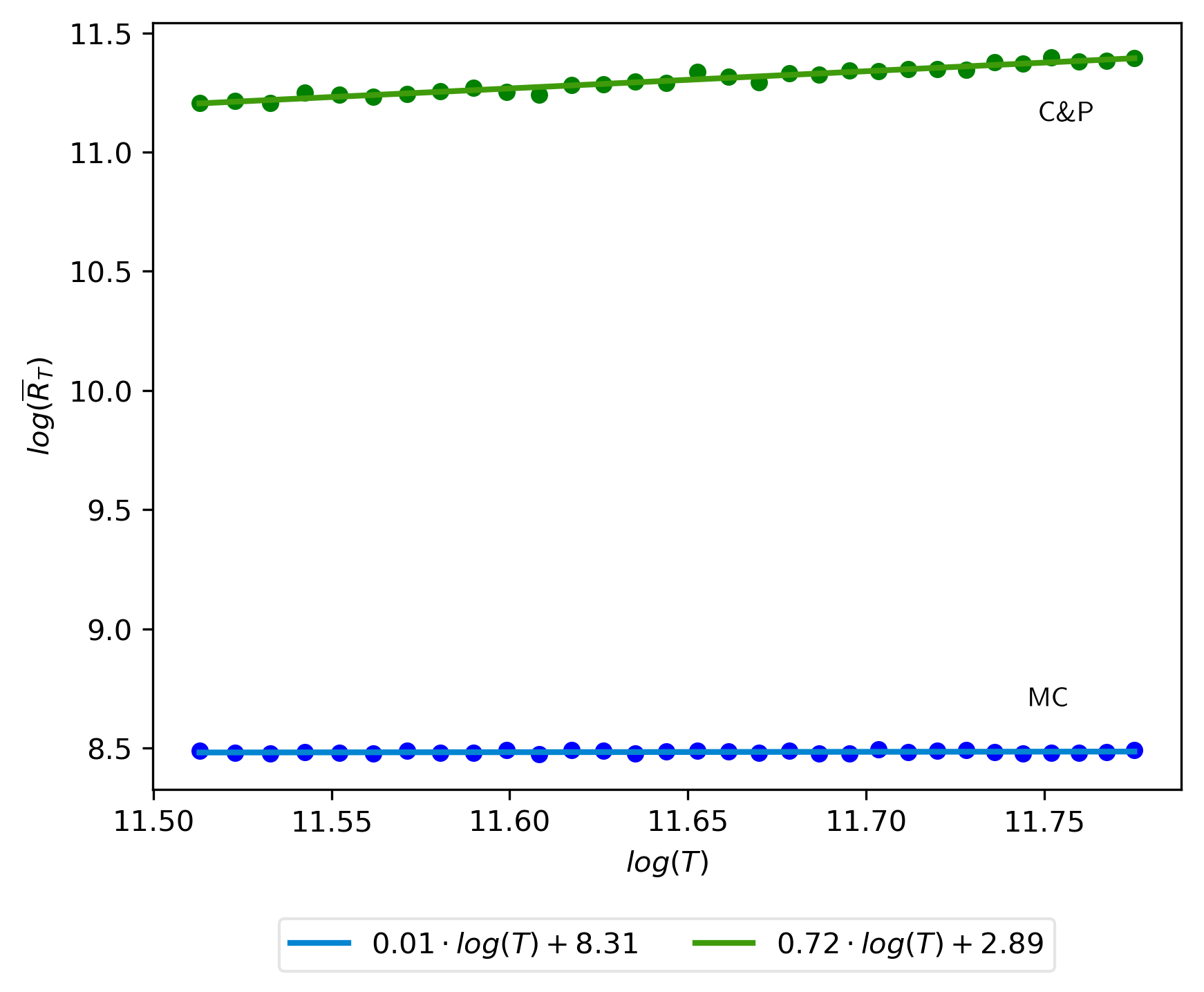}
	\caption{Loglog plot of experiment 1 (stochastic losses).}
	\label{fig:experiments_acc_stochastic}
\end{figure}


\paragraph{Experiment 2}
In this experiment, we use the setup from experiment 2 in section \ref{sec:experiments}, i.e. we model a network in which two links go down at time $\frac{T}{4}$ and $\frac{T}{3}$, respectively. Figure ~\ref{fig:experiments_acc_nonstochastic_down} shows the results of this experiment.

\begin{figure}[t]
	\centering
	\includegraphics[width=0.7\textwidth]{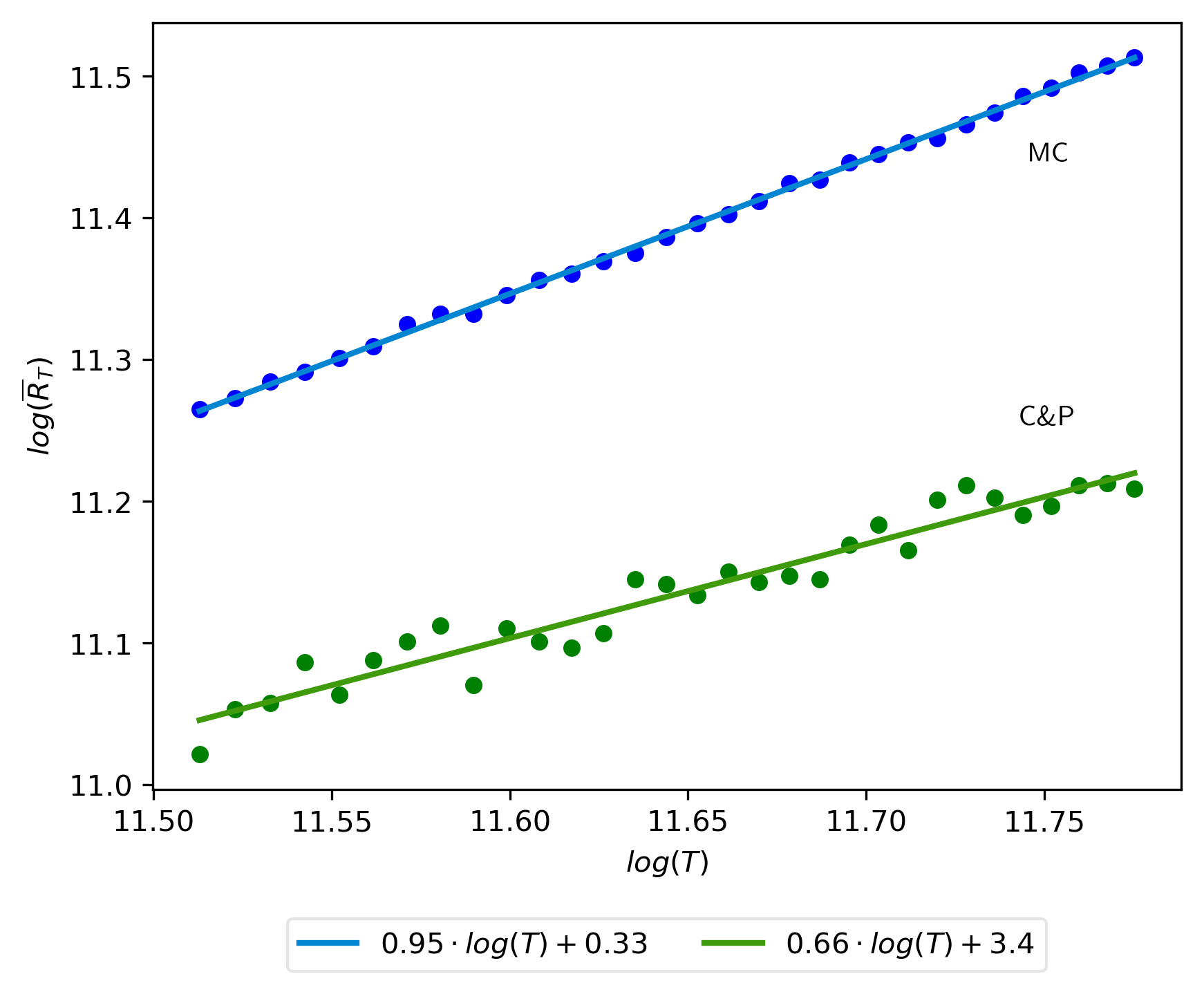}
	\caption{Loglog plot of experiment 2 (link failures).}
	\label{fig:experiments_acc_nonstochastic_down}
\end{figure}

\paragraph{Experiment 3}
For this, we use the setup from experiment 3 in section \ref{sec:experiments}, in which a bad link suddenly improves or comes up at time $\frac{T}{4}$. The outcome of this experiment shown shown in Figure ~\ref{fig:experiments_acc_nonstochastic_up}.

\begin{figure}[t]
	\centering
	\includegraphics[width=0.7\textwidth]{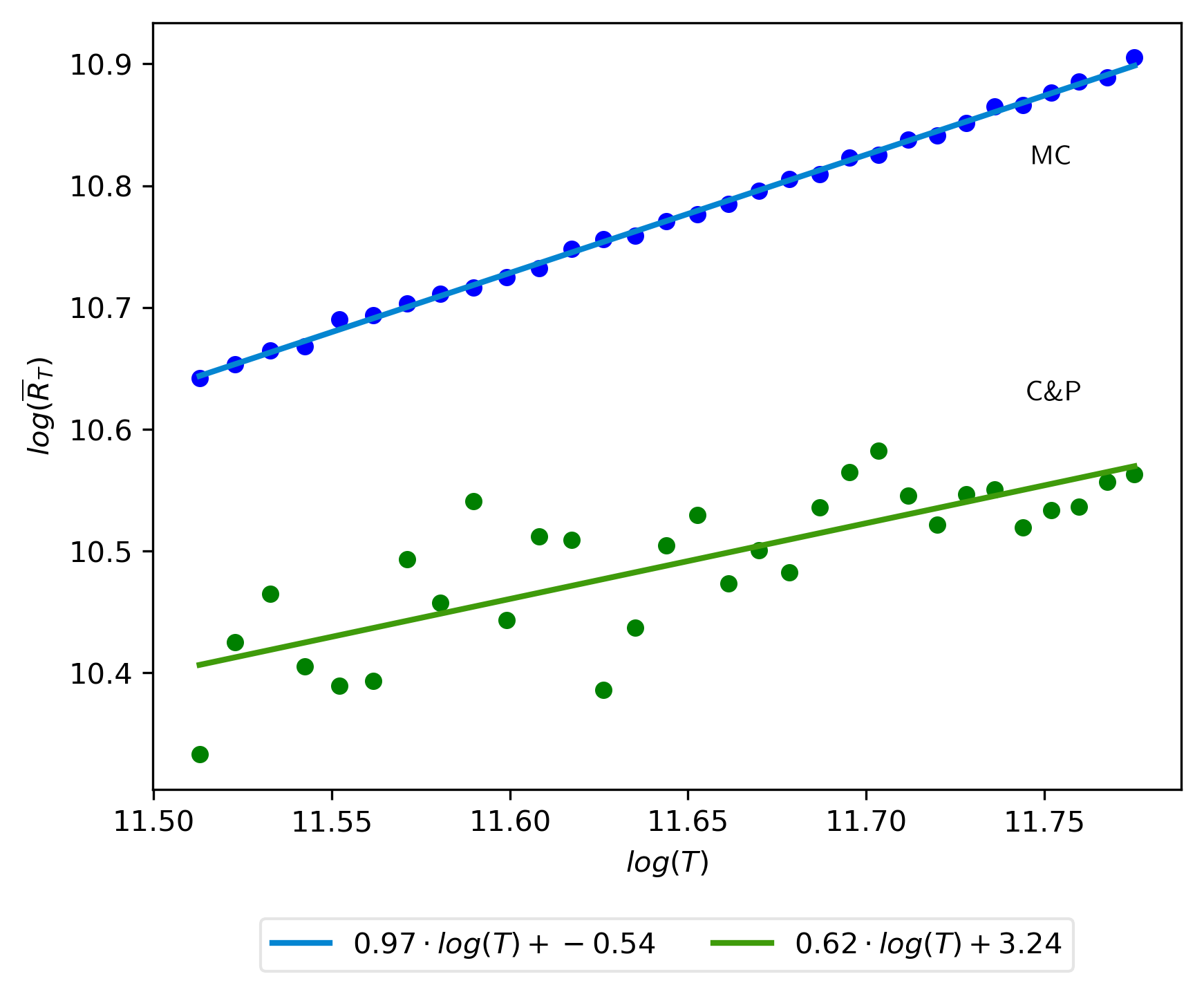}
	\caption{Loglog plot of experiment 3 (link improves).}
	\label{fig:experiments_acc_nonstochastic_up}
\end{figure}

\end{document}